  \providecommand\BibTeX{{%
    \normalfont B\kern-0.5em{\scshape i\kern-0.25em b}\kern-0.8em\TeX}}}
\newtheorem{theorem}{Theorem}[section]
\begin{document}

\title{Graph Convolution Networks with Global Random Walk for Directionality and Heterophily}

\author{Wei Zhuo}
\affiliation{%
  \institution{Sun Yat-sen University}
  \city{Shenzhen}
  \country{China}
}
\email{zhuow5@mail2.sysu.edu.cn}

\author{Guang Tan}
\affiliation{%
  \institution{Sun Yat-sen University}
  \city{Shenzhen}
  \country{China}
}
\email{tanguang@mail.sysu.edu.cn}

\renewcommand{\shortauthors}{Trovato and Tobin, et al.}

\begin{abstract}
	Graph Neural Networks (GNNs) have received increasing attention for representation learning in various machine learning tasks. However, most existing GNNs applying neighborhood aggregation usually perform poorly on the graph with heterophily where adjacent nodes belong to different classes. In this paper we show that in typical heterophilous graphs, the edges may be directed, and whether to treat the edges as is or simply make them undirected greatly affects the performance of the GNN models. Furthermore, due to the limitation of heterophily, it is highly beneficial for the nodes to aggregate messages from similar nodes beyond local neighborhood. These motivate us to develop a model that adaptively learns the appropriate directionality for the graph, and exploits the underlying long-distance correlations between nodes. Here, we propose a spectral-based model, which enables the two conditions to be fused in a unified framework. Specifically, we first generalize the graph Laplacian to directed graph, termed {\em DiLaplacian}, based on the proposed Feature-aware PageRank, which simultaneously considers the graph directionality and long-distance correlation between nodes on the feature level. Based on DiLaplacian, we induce commute time between pair-wise nodes to further preserve the long-distance correlation on the topology level. Then the graph convolutional operator is induced by the commute time matrix. Extensive experiments on ten datasets with different levels of homophily demonstrate the effectiveness of our method.
	
\end{abstract}

\begin{CCSXML}
<ccs2012>
   <concept>
       <concept_id>10002950.10003624.10003633.10010917</concept_id>
       <concept_desc>Mathematics of computing~Graph algorithms</concept_desc>
       <concept_significance>500</concept_significance>
       </concept>
   <concept>
       <concept_id>10010147.10010257.10010293.10010319</concept_id>
       <concept_desc>Computing methodologies~Learning latent representations</concept_desc>
       <concept_significance>500</concept_significance>
       </concept>
   <concept>
       <concept_id>10010147.10010257.10010293.10010294</concept_id>
       <concept_desc>Computing methodologies~Neural networks</concept_desc>
       <concept_significance>500</concept_significance>
       </concept>
 </ccs2012>
\end{CCSXML}

\ccsdesc[500]{Mathematics of computing~Graph algorithms}
\ccsdesc[500]{Computing methodologies~Learning latent representations}

\keywords{Graph neural networks, network representation learning, deep learning}

\maketitle
\section{Introduction}\label{sec:intro}
GNNs have demonstrated remarkable performance in a wide spectrum of graph-based tasks, such as node classification~\cite{kipf2017semi, hamilton2017inductive,zhuo2023graph,zhuo2022proximity}, link prediction~\cite{zhang2018link}, graph classification~\cite{zhuo2022efficient}, and anomaly detection~\cite{zhuo2024partitioning}. Among the plentiful models that have been proposed, message passing neural networks (MPNNs)~\cite{gilmer2017neural} have been by far the most widely adopted. The main idea of MPNN-based GNNs is that each node aggregates messages from its neighbors and utilizes these messages to update its representation. 

Despite its success, such a recursively neighbor aggregating schema is susceptible to the quality of the graph. Specifically, MPNN-based GNNs rely on the assumption of homophily\footnote{Note that an easily-confounding concept that needs to be clarified is {\em (Homo)Heterophily $\neq$ (Homo)Heterogeneity} (See \Cref{app:notation}).}~\cite{zhu2020beyond} as a strong inductive bias, i.e., adjacent nodes tend to belong to the same class. However, homophily is not a universal principle, and there exist networks exhibiting heterophily, where 
a significant portion of edges connect nodes with different classes. Simply applying typical MPNN-based GNNs to a heterophilous graph would result in a huge drop in effectiveness~\cite{zhu2020beyond,chien2021adaptive}. For example, in social networks, the unique attributes of celebrities would be diluted if aggregated with the attributes of their followers. Hence, how to design a GNN architecture that works both for homophilous and heterophilous graphs has been a challenging topic recently. 



Recent works try to overcome the limitation of heterophily by stacking multiple GNN layers and combining intermediate representations~\cite{abu2019mixhop, chenWHDL2020gcnii,xu2018representation, yan2021two, chien2021adaptive,zhu2020beyond} to capture information from high-order neighborhoods. However, stacking layers may bring increasing parameters, causing the risks of overfitting, vanishing gradient, and oversmoothing~\cite{li2018deeper}. Such coarsely aggregating distant neighbors also results in introducing noise and irrelevant information~\cite{wang2021powerful}. 
Another class of methods aims to design adaptive frequency response filters~\cite{fagcn2021, dong2021graph} to integrate different signals in the process of message passing. While these methods can fuse positive information in per-hop neighbors, they still adopt the propagation on the original connectivity, such that inevitably affected by heterophily and can not capture sufficient information from intra-class nodes. Besides, some works about graph structure learning~\cite{franceschi2019learning, chen2020iterative,jiang2019semi} iteratively optimize the connectivity based on the learned embedding by parameterizing each edge locally. However, for semi-supervised tasks, the graph connectivity learned from labeled nodes is hard to generalize to unlabeled nodes, leading to `structural overfitting' for labeled nodes and exhibiting poor performance on test nodes for most classic heterophilous graph benchmarks~\cite{wang2021graph}.

We address the aforementioned issues of heterophily with a new GNN model, following the spectral-based approach. The model aims to capture underlying high-order (long-distance) correlations between nodes from both feature and topology perspectives in a preprocessing manner, avoiding stacking multiple layers and parameterizing the adjacency matrix. As another contribution of this study, our model has ability to adaptively learn the graphs' directionality, which is demonstrated to be the key to designing effective heterophilous GNNs~\cite{zhang2021magnet}. An example is shown in \Cref{tab:dir} that lists several common heterophilous graph benchmarks, all exhibiting a directed structure. The direction property of edges being used as is, or simply discarded, can greatly affect the message passing and the prediction accuracy. Overall, our proposed spectral-based model is a unified framework that considers directionality, high-order correlations on feature and topology levels simultaneously.

\begin{table}[t]
\centering
\caption{Node classification accuracy of GNNs in percent on directed graphs (subscript $d$) and their undirected versions (subscript $u$).}
\begin{adjustbox}{width=0.8\linewidth}
\begin{tabular}{cccccccc}
		\toprule
		&\textbf{Texas} & \textbf{Wisconsin} & \textbf{Cornell} & \textbf{Chameleon}\\
		\% Directed edges& 76.6& 77.9&86.9 &73.9\\  
		\midrule
		GCN$_u$         & 51.97 & 50.85        &  61.93           & 67.96                      \\
		GCN$_d$         & 61.08 & 53.14        &  57.84           & 63.82                      \\
		GAT$_u$         & 58.92 & 55.95        &  60.69           & 60.69                       \\
		GAT$_d$         & 52.73 & 61.37        &  62.73           & 57.66                       \\
		\bottomrule
\end{tabular}
\end{adjustbox}
\label{tab:dir}
\end{table}

Specifically, we first generalize the graph Laplacian defined for undirected graph to directed graph (digraph), called DiLaplacian (\textbf{Di}graph \textbf{Laplacian}), based on the transition matrix and its stationary distribution. To make the construction possible for general graphs and simultaneously capable of capturing long-distance feature similarity, we propose a Feature-aware PageRank algorithm to strengthen the original graph. The algorithm guarantees strong connectivity of the graph, and hence the existence of the stationary distribution; it also gives each node a certain probability to jump to other nodes with similar features. The graph convolutional operator based on the spectral analysis of DiLaplacian leads to a model called {\em DiL-GCN}, which captures the directionality and feature-level correlations between nodes.

On top of DiL-GCN, we further fuse long-distance correlations between nodes on the topology level, by defining a measure of node proximity based on pair-wise commute time. Commute time (CT) is a representative path-based feature that looks at the global graph structure. It provides a more strict and precise proximity measure between nodes in digraph than the shortest path. Specifically, the commute time between nodes $v_i$ and $v_j$ is the expected number of steps of a random walk to get from $v_i$ to $v_j$, and back to $v_i$. The less commute time from $v_i$ to $v_j$, the higher impact of $v_j$ on $v_i$, and therefore they should be more similar in the embedding space. Hence, we can use CT to measure the proximity between nodes in many real-world digraphs. For example, when performing a random walk on a social network, the walker starting at an ordinary user can immediately reach a celebrity it follows, but the walker can hardly return to the starting node by the celebrity's outgoing links, so the CT between the ordinary user and the celebrity may be very large and their proximity is low, even if they are adjacent. On the other hand, two ordinary users in the same compact community usually have fewer CT even if they are not connected directly, hence they have high proximity. In this paper, we theoretically prove that commute times between pair-wise nodes can be induced by DiLaplacian spectrum in a sparse manner, such that the derived commute time matrix can inherit the properties of DiLaplacian (i.e., feature-aware and directionality-dependent) while preserving long-distance correlations on the topology level. With such commute time matrix in hand, we can then define the graph convolution called {\em DiL-GCN$_{CT}$}. To the best of our knowledge, we are the first to integrate the commute time into the GNN model. 

We summarize our main contributions as follows:
\begin{itemize}
    \item We generalize graph Laplacian to digraph by proposing DiLaplacian. It preserves directionality and feature-level long-distance correlations.
    \item Based on DiLaplacian, we further induce commute time matrix, such that feature and topology level long-distance correlations, and directionality can be fused in a unified framework, called DiL-GCN$_{CT}$.
    \item We conduct comprehensive experiments on ten graph datasets, including seven heterophilous graphs and three homophilous graphs. The empirical evaluations demonstrate the superiority of our models.
\end{itemize}

\section{Related Work}
\subsection{Digraph Laplacian} \label{sec:rw_dl}
In contrast to undirected graphs not much is known about normalized Laplace operators for directed graphs. \citet{chung2005laplacians} defines a normalized Laplace operator for strongly connected directed graphs with nonnegative weights as $\mathbf{I}-\frac{\pi^{1 / 2} \mathbf{P} \pi^{-1 / 2}+\pi^{-1 / 2} \mathbf{P}^{*} \pi^{1 / 2}}{2}$. This Laplace operator is defined as a self-adjoint operator using the transition probability operator $P$ and the Perron vector $\pi$. \citet{singh2016graph} consider in-degree matrix and define the directed Laplacian as $\mathbf{D}_{\text {in }}-\mathbf{A}$, where $\mathbf{D}_{\text {in }}=\operatorname{diag}\left(\left\{d_{i}^{\text {in }}\right\}_{i=1}^{N}\right)$ is the in-degree matrix. \citet{li2012digraph} use stationary probabilities of the Markov chain governing random walks on digraphs to define Laplacian as $\pi^{\frac{1}{2}}(\mathbf{I}-\mathbf{P}) \pi^{-\frac{1}{2}}$. Different from prior works, we derive the directed graph Laplacian from the perspective of the graph signal and propose DiLaplacian $\Pi(\mathbf{D}^{-1}-\mathbf{P})$ (\cref{eq:dig2} in \Cref{sec:DiL}), which consists of degree matrix $\mathbf{D}$, transition matrix $\mathbf{P}$ and the Perron vector‘s diagonalization $\Pi$.

\subsection{GNN on Non-homophilous Graphs}
GNNs have achieved tremendous success on homophilous (assortative) graphs~\cite{kipf2017semi,hamilton2017inductive, wu2019simplifying, velickovic2018graph,qu2019gmnn,gilmer2017neural}. However, recent work~\cite{pei2019geom,zhu2020beyond} shows that traditional MPNN-based GNNs perform poorly on heterophilous (disassortative) graphs, and provide two metrics to measure the degree of homophily on the node level and edge level. To tackle the limitation of heterophily, the early method Geom-GCN~\cite{pei2019geom} precomputes unsupervised node embeddings and uses neighborhoods defined by geometric relationships in the resulting latent space to define graph convolution. H$_2$GCN~\cite{zhu2020beyond} proposes to make full use of high order neighborhoods, and combine self-embeddings and neighbor embeddings using concatenation. CPGNN~\cite{zhu2020graph} integrates the compatibility matrix as a set of learnable parameters into GNN, which it initializes with an estimated class compatibility matrix. PPNP~\cite{klicpera_predict_2019} and GDC~\cite{klicpera_diffusion_2019} redefine the node proximity using PageRank and Heat Kernel PageRank. GPRGNN~\cite{chien2021adaptive} performs feature aggregation for multiple steps to capture long-range information and then linearly combines the features aggregated with different steps, where the weights of the linear combination are learned during the model training. Some other methods like FAGCN~\cite{fagcn2021} use an attention mechanism and learn the weight of an edge as the difference in the proportion of low-frequency and high-frequency signals. AdaGNN~\cite{dong2021graph} leverages a trainable filter that spans across multiple layers to capture the varying importance of different frequency components for node representation learning. \citet{zhang2021magnet} experimentally show that for typical heterophilous graph benchmarks, the directionality of the graph plays an important role, and it is one of the keys to designing effective GNNs beyond homophily.

\section{Methodology}
\subsection{DiLaplacian}\label{sec:DiL}
In the general setting, $G= (V, E, \mathbf{X})$ is used to denote an unweighted directed graph with $N$ nodes, where $V= \{v_i\}^N_{i=1}$ is the node set, $E \subseteq(V \times V)$ is the edge set, $\mathbf{X} \in \mathbb{R}^{N \times d}$ is the node feature matrix with the number of features $d$ per node. Let $\mathbf{A} \in \mathbb{R}^{N \times N}$ be the adjacency matrix and $\mathbf{D} = diag(d_1,\cdots, d_N) \in \mathbb{R}^{N \times N}$ be the degree matrix of $\mathbf{A}$, where $d_i = \sum_{v_j \in V} \mathbf{A}(i,j) = \mathbf{A}\cdot e$ is the out-degree of $v_i$, and $e$ is an all-one vector. Let $\tilde{\mathbf{A}}= \mathbf{A}+ \mathbf{I}$ and $\tilde{\mathbf{D}} = \mathbf{D} + \mathbf{I}$ denote the augmented adjacency and degree matrix with self-loops, respectively. The transition probability matrix of the Markov chain associated with random walks on $G$ can be defined as $\mathbf{P} = \mathbf{D}^{-1}\mathbf{A}$, where $\mathbf{P}(i,j) = \mathbf{A}(i,j) / deg(v_i)$ is the probability of a 1-step random walk starting from $v_i$ to $v_j$. Graph Laplacian formulized as $\mathbf{L} = \mathbf{D} - \mathbf{A}$ is defined on the undirected graph whose adjacency matrix is symmetric. The augmented symmetrically normalized Laplacian with self-loop~\cite{wu2019simplifying} is defined as $\tilde{\mathcal{L}} = \tilde{\mathbf{D}}^{-\frac{1}{2}} \tilde{\mathbf{L}} \tilde{\mathbf{D}}^{-\frac{1}{2}}$, where $\tilde{\mathbf{L}} = \tilde{\mathbf{D}}-\tilde{\mathbf{A}}$. 


Some spectral-based GNNs~\cite{kipf2017semi,defferrard2016convolutional} simply make a directed graph be undirected by adding reverse edges to node pairs connected by single-directed edges using $\mathbf{A}_u = \frac{\mathbf{A} + \mathbf{A}^\top}{2}$. Although it helps explain GNNs in terms of spectral analysis, the original graph structure is disturbed due to the forced use of a symmetrized adjacency matrix, which can make different graphs share the same Laplacian and many important path-based features hard to obtain, such as hitting time and commute time. It is known that the graph Laplacian is defined as the divergence of the gradient of a signal on an undirected graph. For a signal $f \in \mathbb{R}^N$, $(\mathbf{L} f)(i)=\sum_{j \in \mathcal{N}_i} \mathbf{A}(i, j)(f(i)-f(j))$. In effect, the Laplacian on $f$ acts as a local averaging operator which is a node-wise measure of local smoothness. Following the meaning of the undirected graph Laplacian, we now generalize the existing spectral graph theory defined for undirected graphs to digraphs by defining Laplacian $\mathbf{L}$ acting on $f$: 
\begin{equation}
\begin{aligned}
(\mathbf{L}f)(i) &= \sum_{v_j \in \mathcal{N}^{+}_i} \mathbf{P}(i,j) (f(i)-f(j)) \\
&=((\mathbf{D}^{-1}-\mathbf{P})f)(i)
\end{aligned}
\label{eq:dig1}
\end{equation}
Following~\cite{chung2005laplacians,li2012digraph}, here we replace $\mathbf{A}$ with its row-normalization $\mathbf{P}$. Since edges are unweighted, then $\mathbf{P}(i,j) = \frac{1}{d_i}$. $\mathcal{N}^+_i$ is the set of $v_i$'s out-neighbors. The digraph Laplacian can be defined as $\mathbf{L} = \mathbf{D}^{-1}-\mathbf{P}$. In \cref{eq:dig1}, $\mathbf{L}$ acts as a local expectation operator of the difference of the signal between node and its neighbors. Nevertheless, $\mathbf{L}$ cannot directly capture the unique nature of random walks on the digraph. Specifically, a property of a random walk is that in the limit, the long-term average probability of being at a particular node is independent of the start node, or an initial probability distribution over nodes, provided only that the underlying graph is irreducible. This property provides a global perspective of node importance that $\mathbf{L}$ cannot capture. Leveraging the inherent equivalence between digraph and Markov chain~\cite{poole2014linear}, we can solve this problem by computing the stationary probability distribution and integrating it into $\mathbf{L}$, termed {\em DiLaplacian} $\mathbf{T}$.


Specifically, we firstly assume the digraph $G$ is irreducible and aperiodic (I\&A). A fundamental result from~\cite{poole2014linear} is that $G$ has a unique stationary probability distribution $\pi$ (i.e., Perron vector), satisfying the balance equation $\pi(i)=\sum_{v_j \in \mathcal{N}^-_i} \pi(j) \mathbf{P}(j,i)$, where $\mathcal{N}^-_i$ is the set of incoming neighbors of $v_i$. The stationary distribution $\pi$ can be computed by recurrence and it converges to the left eigenvector\footnote{Since the left eigenvector is a row vector, $\pi$ is its transpose.} of the dominant eigenvalue of the transition matrix $\mathbf{P}$, and $\pi$ satisfies $\sum_i \pi(i) = 1$ where the $i$-th element $\pi(i)$ is strictly positive. It can be interpreted as the limiting probability of finding a $\infty$ length random walk starting at any other nodes and ending at node $v_i$, i.e., $[\lim_{k \to \infty} \mathbf{P}^k](j,i) = \pi(i)$ where $j \in \{1,\cdots, N \}$. Thus, $\pi(i)$ can be used to measure the global importance of $v_i$, and we further integrate $\pi$ into $\mathbf{L}$ in \cref{eq:dig1} as:
\begin{equation}
\begin{aligned}
(\mathbf{T}f)(i) &= \sum_{v_j \in \mathcal{N}^{+}_i} \pi(i) \mathbf{P}(i,j) (f(i)-f(j)) \\
&=(\Pi(\mathbf{D}^{-1}-\mathbf{P})f)(i),
\end{aligned}
\label{eq:dig2}
\end{equation}
where $\Pi = diag(\pi(1),\cdots, \pi(n))$ and DiLaplacian $\mathbf{T} = \Pi(\mathbf{D}^{-1}-\mathbf{P})$. In form, the main differentia between DiLaplacian and other forms mentioned in \Cref{sec:rw_dl} is that DiLaplacian involves the degree matrix $\mathbf{D}$ and the Perron Vector $\pi$, which encode the local and global structure respectively.

However, our proposed DiLaplacian is based on a strong assumption, that is, $G$ is irreducible and aperiodic, which does not necessarily hold for general graphs. The input digraph may contain multiple connected components or absorbing nodes that make the graph reducible; or it may contain cyclic structures that make the graph periodic. In these cases, there is no guarantee for the positive, existence and uniqueness of the stationary distribution $\pi$. To make a digraph I\&A, ~\citet{tong2020digraph} add a teleporting probability distribution over all the nodes to address this issue, with the help of PageRank~\cite{page1999pagerank} that amends the transition matrix as $\mathbf{P}_{pr}=\gamma \mathbf{P} + (1-\gamma) \frac{e e^\top}{N}$, where $\gamma \in (0,1)$ and $e$ is the all-one column vector. $\mathbf{P}_{pr}$ means that the walker can randomly choose a non-neighbor node as the next step with probability $\frac{1-\gamma}{N}$. It is obvious that $\mathbf{P}_{pr}$ is irreducible and aperiodic, so it has a unique $\pi$. However, this solution yields a complete graph with a dense matrix $\mathbf{P}_{pr}$, which is extremely unfriendly to subsequent operations. Noticing that the input graph is attributed, i.e., every node has a feature vector $x \in \mathbb{R}^d$ (node degree, centrality or shortest path matrix can be node features if $\mathbf{X}$ is absent), we therefore provide an alternative method based on both features and topological structures of the graph, namely Feature-aware PageRank, which yields a sparse, irreducible and aperiodic transition matrix. In addition, for graphs with strong heterophily, the original topological structure is unreliable for MPNN-based GNNs, while the global feature-wise similarity between nodes provides an opportunity for the model to mitigate the effect of heterophilous structure.

\begin{figure*}[h!]
	\centering 
	\includegraphics[width=1\linewidth]{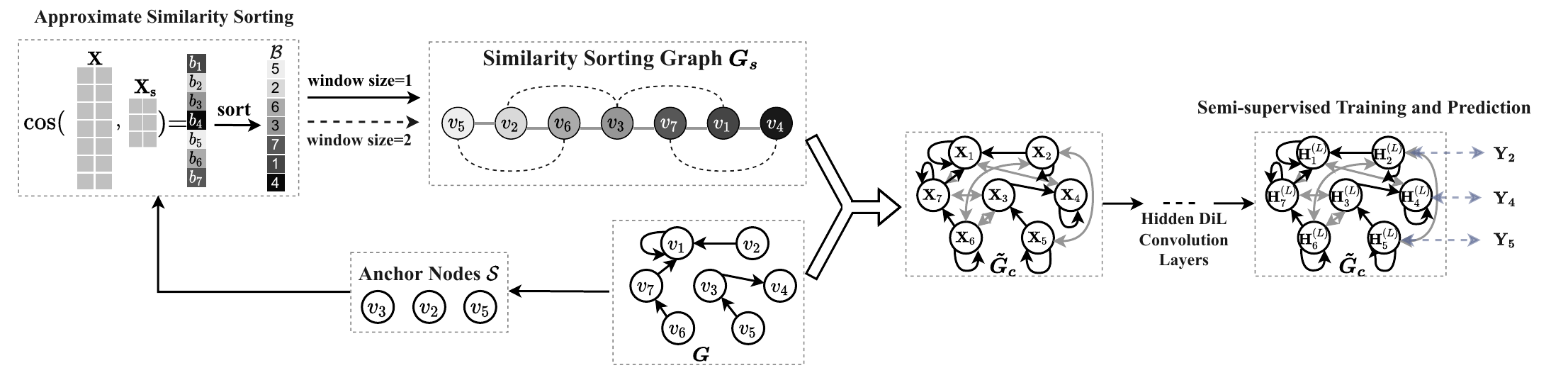}
	\caption{An illustration of the approximate similarity sorting. With the anchor node set $\mathcal{S}$, we compute the cosine similarity between all node features $\mathbf{X}$ and anchor node features $\mathbf{X}_s$ to obtain a similarity score vector. Then, we sort the similarity scores and return a sorted node index vector $\mathcal{B}$. We connect the nodes in order according to the score ranking in the $\mathcal{B}$ to generate a similarity sorting graph $G_s$. Note that $G_s$ is an undirected strongly connected graph, so $G_s$ is irreducible. At last, we combine $G_s$ and the input graph $G$ by adding all undirected edges in $G_s$ to $G$ (grey double-directional arrow), to generate an irreducible graph $G_c$.}
	\label{fig:fpr}
\end{figure*}

\subsection{Feature-aware PageRank}\label{sec:fpr}
Applying teleports in classical PageRank aims to solve two problems in the web graph, i.e., dead ends and spider traps~\cite{page1999pagerank}. Specifically, dead ends mean some nodes do not have any outgoing neighbors, which leads to PageRank scores ($\pi$ is the PR score vector) converging to 0 on all nodes. Spider traps mean there exist absorbing nodes in the graph, which leads to PageRank scores converging to 0 except the absorbing nodes. Thus, we only need to construct a \textbf{irreducible} graph, i.e., a strongly connected graph, that can solve the above two problems. Meanwhile, the graph need to be \textbf{aperiodic} to guarantee the PageRank transition matrix has a unique stationary distribution.

Instead of using $\mathbf{P}_{pr}$ as the transition matrix of the graph to build DiLaplacian, we propose Feature-aware PageRank (FPR), which aims to construct an irreducible graph based on node feature similarity, with its transition matrix denoted by $\mathbf{P}_{fpr}$. The main difference from $\mathbf{P}_{pr}$ is that $\mathbf{P}_{pr}$ gives teleport probabilities to all nodes, while $\mathbf{P}_{fpr}$ only gives teleport probabilities to $k$ nearest neighbors in the feature space. In other words, the walker at a node not only has a probability to transit to its outgoing neighbors, but is also subject to a probability of teleporting to $k$ nodes with most similar features to the current node, i.e., feature-aware random walk. 

A direct way is to construct a $k$NN graph based on $\mathbf{X}$, and then add $k$NN edges to $G$ such that $G$ satisfies feature-aware random walk. However, the $k$NN graph is not necessarily irreducible when $k$ is not large enough, such that combining it with $G$ may not yield an irreducible graph. Also, constructing $k$NN graph has $\mathcal{O}(N^2)$ time complexity. To address these issues, we propose the {\it approximate similarity sorting}, an anchor-based solution. The idea behind is the transitivity of similarity. Specifically, we first randomly sample a set of nodes as anchors to form an anchor set $\mathcal{S}=\{s_i\}$, where $|\mathcal{S}|$ is usually much smaller than $N$. For each node $v_i$, we compute the average similarity $b_i$ between node features of $v_i$ and every anchor node, as shown in \cref{eq:simsort}. Then we sort all nodes' average similarity values and return the indices of sorted nodes, denoted as $\mathcal{B} \in \mathbb{R}^N$.
\begin{equation}
    b_i=\frac{1}{|\mathcal{S}|} \sum^{|\mathcal{S}|}_{j=1} \cos ( v_i, s_j), \quad \mathcal{B} = \underset{i}{\mathrm{argsort}} \{b_i\}^N_{i=1},
    \label{eq:simsort}
\end{equation}
where $\cos(v_i,s_j)$ is the cosine similarity between node features of $v_i$ and an anchor node $s_j$. Then $\mathrm{argsort}(\cdot)$ returns the node indices that would sort $\{b_i\}_{i=1}^N$ in either descending or ascending order (different order will not affect the result). We next connect the nodes one by one based on the sorted node indices in $\mathcal{B}$ to construct a {\em similarity sorting graph} $G_s$ as shown in the left part of \Cref{fig:fpr}. For example, the similarity between $v_2$ and anchor nodes is close to that between $v_6$ and anchor nodes, i.e., the values of $b_2$ and $b_6$ are similar, then $v_2$ and $v_6$ are connected in $G_s$ by undirected edges. In other words, nodes with similar extents of similarity to anchor nodes are similar.

The similarity sorting graph $G_s = (\mathbf{A}_s, \mathbf{X})$ represents that each node approximatively connects to its top-$k$ most similar nodes, where $k = 2$ in \Cref{fig:fpr}. We can further add skip connection edges on $G_s$ and use the hyperparameter $window\_size$ to control the number of neighbors. $window\_size$ can be set to any positive integer, for example we show $window\_size = 2$, i.e., $k = 4$, by adding dummy line edges to $G_s$ as shown in \Cref{fig:fpr}. To guarantee sparsity, $window\_size$ is usually set to a small value. Overall, we construct $G_s$ to preserve the feature similarity while reduce the complexity to $\mathcal{O}(|\mathcal{S}|N)$. Besides, obviously $G_s$ is irreducible, which is one of the necessary conditions of DiLaplacian. By adding all edges of $G_s$ to $G$, we can get a combined graph $G_c = (\mathbf{A}_c, \mathbf{X})$, where $\mathbf{A}_c = \max(\mathbf{A}, \mathbf{A}_s)$, and $G_c$ is strictly irreducible. The transition probability matrix of $G_c$ is denoted as $\mathbf{P}_{fpr} = \mathbf{D}_c^{-1} \mathbf{A_c}$, where $\mathbf{D}_c$ is the out-degree matrix of $G_c$. Performing random walk on $G_c$ is called Feature-aware PageRank (FPR), which can be seen as giving the walker a certain probability of teleporting to other nodes who have similar features to the current node. Compared with PageRank transition matrix $\mathbf{P}_{pr}$, which uses all nodes as teleports, our FPR yields a sparse transition matrix $\mathbf{P}_{fpr}$ while ensuring irreducibility and it preserves the underlying high-order correlations in the feature space. 

Besides irreducibility, to guarantee the existence and uniqueness of the stationary distribution $\pi$, such that DiLaplacian can be constructed, the other necessary condition is that $\mathbf{P}_{fpr}$ should be aperiodic. To satisfy it, we only need to add a self-loop for each node with a non-zero probability, because the greatest common divisor of the lengths of its cycles is one. Thus, the transition matrix of FPR can be rewritten as $\tilde{\mathbf{P}}_{fpr} = \tilde{\mathbf{D}}_c^{-1} \tilde{\mathbf{A}}_c$, where $\tilde{\mathbf{A}}_c = \mathbf{A}_c + \mathbf{I}$ and $\tilde{\mathbf{D}}_c$ is 
the corresponding degree matrix. Therefore,  $\tilde{\mathbf{P}}_{fpr}$ is associated with an irreducible and aperiodic graph $\tilde{G}_c$. According to Perron-Frobenius Theory~\cite{poole2014linear}, $\tilde{\mathbf{P}}_{fpr}$ has a positive left eigenvector $\pi$ corresponding to its dominate eigenvalue 1 with algebraic multiplicity 1, which guarantees the random walk on $\tilde{G}_c$ converge to a unique positive stationary distribution equal to $\pi$. Therefore, the DiLaplacian in \cref{eq:dig2} can be rewritten as:
\begin{equation}
    \tilde{\mathbf{T}}=\Pi(\tilde{\mathbf{D}}_c^{-1} - \tilde{\mathbf{P}}_{fpr}).
    \label{eq:tilde_dil}
\end{equation}

\paragraph{Combinatorial Laplacian} Prior works~\cite{zhang2021magnet} and the results in \Cref{tab:dir} both demonstrate that the directionality, which describes whether to consider the directed structure, is uncertain for different datasets. Conventional MPNN-based GNNs mostly treat directionality as a binary hyperparameter, i.e, treat the edges as is or simply make them undirected, and the choice of directionality depends on performance on the validation set under multiple trials. Here, we aim to do a `soft' treatment of the graph’s directionality, such that the model can adaptively learn the appropriate directionality. Specifically, we use $G_{c_u} = (\mathbf{A}_{c_u},\mathbf{X})$ to represent the undirected version of $G_c$, where $\mathbf{A}_{c_u} = \max (\mathbf{A}_c,\mathbf{A}_c^\top)$ is a symmetric adjacency matrix of $G_{c_u}$. Then we combine DiLaplacian defined in \cref{eq:tilde_dil} and undirected graph Laplacian to define the combinatorial Laplacian $\mathbf{L}_c$ as:
\begin{equation}
    \mathbf{L}_{comb} = \alpha \mathbf{L} + \beta \tilde{\mathbf{T}}, 
    \label{eq:comb_lap}
\end{equation}
where $\mathbf{L} = \mathbf{D}_{c_u} - \mathbf{A}_{c_u}$ and $\mathbf{D}_{c_u}$ is the degree matrix of $G_{c_u}$. $\alpha, \beta \in \mathbb{R}^1$ are learnable parameters to adaptively adjust the bias of directionality. 

\subsection{DiL-GCN}
In this subsection, we derive a graph convolutional operator based on the combinatorial Laplacian $\mathbf{L}_c$ in \cref{eq:comb_lap}. For an I\&A undirected graph, its Perron vector $\pi$ of transition matrix can be computed by $\pi(i) = \frac{d_i}{\sum_k d_k}$, so the vector of node degrees $\mathbf{A} \cdot e$ is a scalar multiple of $\pi$. On the other hand, for an I\&A digraph, $\pi(i)=\sum_{v_j \in \mathcal{N}^-_i} \pi(j) \mathbf{P}_{fpr}(j,i)$ is the sum of all incoming probabilities from the neighbors of $v_i$, it hence plays the same role of degree vector in undirected graph that reflects the connectivity between nodes~\cite{gleich2006hierarchical}. Based on these properties, \cite{chung2005laplacians} and \cite{zhou2005learning} define the normalized digraph Laplacian as a Hermitian matrix by using $\Pi$ to normalize transition matrix. Following the same spirit, we define the normalized DiLaplacian for $\tilde{\mathbf{T}}$ as:
\begin{equation}
    \mathcal{T} = \frac{1}{2} \left(\Pi^{\frac{1}{2}} (\tilde{\mathbf{D}}_{c}^{-1} - \tilde{\mathbf{P}}_{fpr}) \Pi^{-\frac{1}{2}} + 
    \Pi^{-\frac{1}{2}} (\tilde{\mathbf{D}}_{c}^{-1} - \tilde{\mathbf{P}}_{fpr}^\top) \Pi^{\frac{1}{2}} \right),
\label{eq:norm_dig}
\end{equation}
which is a real-valued matrix with a full set of real eigenvalues. It is known that the normalized $\mathbf{L}$ can be defined as $\mathcal{L} = \mathbf{I} - \mathbf{D}_{c_u}^{-\frac{1}{2}}\mathbf{A}_{c_u}\mathbf{D}_{c_u}^{-\frac{1}{2}}$. Then the normalized combinatorial Laplacian can be represented as: 
\begin{equation}
    \mathcal{L}_{comb} = \alpha \mathcal{L} + \beta \mathcal{T}
\end{equation}

Recall that GCN~\cite{kipf2017semi} uses the $1$-st order Chebyshev polynomials with specific coefficients to approximate a graph filter as $\widehat{\mathbf{A}}_{c_u} = \mathbf{I}-\mathcal{L} = \mathbf{D}_{c_u}^{-\frac{1}{2}}\mathbf{A}_{c_u}\mathbf{D}_{c_u}^{-\frac{1}{2}}$, which serves as a convolution support to do neighbor weighted aggregation for each node. Equipped with the normalized combinatorial Laplacian $\mathcal{L}_{comb}$ and node information matrix $H^{(\ell)}$ at iteration $\ell$, we use the following graph convolution layer to define DiL-GCN: 
\begin{equation}
    H^{(\ell+1)} = \sigma(H^{(\ell)}W_0^{(\ell+1)} + (\alpha \widehat{\mathbf{A}}_{c_u} + \beta \widehat{\mathcal{T}})H^{(\ell)}W_1^{(\ell+1)})
    \label{eq:digcn}
\end{equation}
where $\sigma$ is the activation function, $W_0$ and $W_1$ the trainable matrix, $\widehat{\mathbf{A}}_{c_u}$ the convolutional operator corresponding to $\mathcal{L}$, $H^{(0)} = \mathbf{X}$. In addition, we separate the self-embedding to preserve more personalized information, which is an effective design for networks with heterophily~\cite{zhu2020beyond}. $\widehat{\mathcal{T}}$ is the other convolutional operator derived by the normalized DiLaplacian $\mathcal{T}$:
\begin{equation}
    \widehat{\mathcal{T}} = \tilde{\mathbf{D}}_{c}^{-1} + \frac{1}{2} \left(\Pi^{\frac{1}{2}} \tilde{\mathbf{P}}_{fpr} \Pi^{-\frac{1}{2}} + \Pi^{-\frac{1}{2}}  \tilde{\mathbf{P}}_{fpr}^\top \Pi^{\frac{1}{2}}\right).
    \label{eq:aug_DiLaplacian}
\end{equation}
The detailed derivation procedure from $\mathcal{T}$ to $\widehat{\mathcal{T}}$ is based on spectral analysis, which is available in \Cref{app:spectral}. 

Also of note, the two parts of $\mathbf{L}_{comb}$ are both built based on the combined graph $G_c$, thus DiL-GCN is able to capture the long-distance correlations on the feature level. On the other hand, from the perspective of topology level, graph topology also provides useful information to guide the node to detect underlying related nodes beyond local neighborhoods, which requires a measure of node proximity that is able to look at the global graph structure. To address this problem, we further apply commute time to estimate the proximity in topology space and propose DiL-GCN$_{CT}$, which can be induced by our proposed DiLaplacian, hence it essentially inherit the properties of the directed structure and long-distance feature correlations.



\subsection{DiLaplacian with Commute Time}
Based on the FPR transition matrix, we can preserve the underlying long-distance correlation on the feature level when performing a random walk on the combined graph $G_c$. Further, we aim to explore long-distance correlation on the topology level, such that these two aspects can be fused in a unified framework. Previous works~\cite{klicpera_predict_2019, wu2019simplifying, klicpera_diffusion_2019} have considered node proximity to reflect length of the shortest path between nodes, that is, the fewer hops from $v_i$ to $v_j$, the higher proximity of $v_j$ to $v_i$. This however does not fully reflect the mutual relationship between nodes in the network topology, especially in digraphs. Given the direct edge (or hierarchy), the distance between nodes is not only related with the steps between them, but also with the depth hierarchy of respective nodes. Thus, we define a measure of node proximity based on commute time.

For two nodes $v_i$ and $v_j$, the hitting time $\mathcal{H}(i,j)$ is the expected steps it takes for a random walk to travel from $v_i$ to $v_j$, and the commute time between them $\mathcal{C}(i,j) = \mathcal{H}(i,j) + \mathcal{H}(j,i)$ is defined as the expected time it takes a random walk from $v_i$ to $v_j$ and then back to $v_i$. The significance lies in that if $v_i$ has a high probability of returning to itself via $v_j$ in a random walk, then $v_i$ and $v_j$ have high proximity. Since dissimilarity between adjacent nodes leads to the heterophily of the graph, we hence impose a stronger restriction on node proximity based on the commute time, such that nodes can sense distant nodes. To preserve such proximity in the GNN model, we propose DiL-GCN$_{CT}$, which defines the graph propagation matrix based on commute time. 

By the results from the standard Markov chain theory~\cite{aldous2002reversible}, we can calculate expected hitting times for random walks on any graph in terms of the fundamental matrix. Given an I\&A digraph $G_c$ with irreducible transition matrix $\tilde{\mathbf{P}}_{fpr}$ and its stationary distribution $\pi$, the fundamental matrix $\mathbf{Z}$ is defined as $\mathbf{Z} = \sum^{\infty}_{t = 0} (\tilde{\mathbf{P}}_{fpr}^t - e \pi^\top)$. It is proved~\cite{li2012digraph} that $\mathbf{Z}$ converges to:
\begin{equation}
    \mathbf{Z} = (\mathbf{I}-\tilde{\mathbf{P}}_{fpr} +\mathbf{J} \Pi)^{-1} - \mathbf{J}\Pi,
    \label{eq:fm}
\end{equation}
where $\mathbf{J} = ee^\top$ is an all-one matrix. Then the hitting time from $v_i$ to $v_j$ has the expression $\mathcal{H}(i,j) = \frac{\mathbf{Z}(j,j)-\mathbf{Z}(i,j)}{\pi(j)}$. We can derive its matrix form:
\begin{equation}
    \mathcal{H} = (\mathbf{J} diag(\mathbf{Z})-\mathbf{Z}) \Pi^{-1},
    \label{eq:hitting}
\end{equation}
where $diag(\mathbf{Z})$ is the diagonal matrix formed by the main diagonal of $\mathbf{Z}$. In \cref{eq:fm}, the first term involves the inversion of a dense matrix, which has $\mathcal{O}(N^3)$ time complexity. Based on our proposed DiLaplacian $\tilde{\mathbf{T}}$ in \cref{eq:tilde_dil}, we can compute $\mathbf{Z}$ in a more efficient way. 

\begin{theorem}
Given a combined graph $\tilde{G}_c$ and its Feature-Aware PageRank transition matrix $\tilde{\mathbf{P}}_{fpr}$, the DiLaplacian of $\tilde{G}_c$ is defined as $\tilde{\mathbf{T}}=\Pi(\tilde{\mathbf{D}}_c^{-1} - \tilde{\mathbf{P}}_{fpr})$. Then the fundamental matrix $\mathbf{Z}$ of $\tilde{G}_c$ can be solved by:
\begin{equation}
    \mathbf{Z} = \Pi^{-\frac{1}{2}} (\Pi^{-\frac{1}{2}} \tilde{\mathbf{T}} \Pi^{-\frac{1}{2}} - \tilde{\mathbf{D}}_{c}^{-1} + \mathbf{I})^{\dagger} \Pi^{\frac{1}{2}},
    \label{eq:fm0}
\end{equation}
where the superscript $\dagger$ means Moore–Penrose pseudoinverse of the matrix.
\label{the:Z}
\end{theorem}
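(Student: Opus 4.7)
The plan is to first algebraically simplify the matrix sitting inside the pseudoinverse and then invoke a standard rank-one correction identity for the Moore-Penrose inverse of a matrix whose right and left null spaces coincide. Substituting $\tilde{T}=\tilde{\Pi}(\tilde{\mathcal{D}}^{-1}-P_{pfpr})$ into the inner parenthetical and using that $\tilde{\Pi}$ and $\tilde{\mathcal{D}}$ are diagonal (hence commute with $\tilde{\Pi}^{\pm 1/2}$), the conjugation $\tilde{\Pi}^{-1/2}\tilde{T}\tilde{\Pi}^{-1/2}$ equals $\tilde{\mathcal{D}}^{-1}-\tilde{\Pi}^{1/2}P_{pfpr}\tilde{\Pi}^{-1/2}$, so the inner expression collapses to
\[
M \;:=\; I-\tilde{\Pi}^{1/2}P_{pfpr}\tilde{\Pi}^{-1/2} \;=\; \tilde{\Pi}^{1/2}(I-P_{pfpr})\tilde{\Pi}^{-1/2}.
\]
This similarity transformation is the key observation: it preserves the spectrum of $I-P_{pfpr}$ while symmetrizing its null-space structure.

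Next I would identify the null and left null spaces of $M$. Because PFPR makes $\mathcal{G}$ irreducible and aperiodic, the eigenvalue $1$ of $P_{pfpr}$ is simple, with right eigenvector $e$ and left eigenvector $\tilde{\pi}^{\top}$. Pulling these through the similarity, the null space of $M$ is spanned by $w:=\tilde{\Pi}^{1/2}e$, and the left null space is spanned by $\tilde{\pi}^{\top}\tilde{\Pi}^{-1/2}=w^{\top}$. Since $\sum_i\tilde{\pi}_i=1$, the vector $w$ has unit length, and $ww^{\top}=\tilde{\Pi}^{1/2}J\tilde{\Pi}^{1/2}$ is the orthogonal projector onto this common one-dimensional kernel.

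With this structure in hand, I would invoke the identity that for any rank-$(N-1)$ matrix $M$ whose right and left null spaces are both spanned by the same unit vector $w$, the Moore-Penrose inverse satisfies $M^{\dagger}=(M+ww^{\top})^{-1}-ww^{\top}$. This is verified by direct multiplication, using $Mw=0$, $w^{\top}M=0$, $w^{\top}M^{\dagger}=0$, $M^{\dagger}w=0$, and $MM^{\dagger}=I-ww^{\top}$. Observing that $ww^{\top}=\tilde{\Pi}^{1/2}(J\tilde{\Pi})\tilde{\Pi}^{-1/2}$, the shifted matrix factors cleanly as $M+ww^{\top}=\tilde{\Pi}^{1/2}(I-P_{pfpr}+J\tilde{\Pi})\tilde{\Pi}^{-1/2}$, so its inverse is $\tilde{\Pi}^{1/2}(I-P_{pfpr}+J\tilde{\Pi})^{-1}\tilde{\Pi}^{-1/2}$. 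Conjugating both terms of the pseudoinverse identity by $\tilde{\Pi}^{-1/2}(\cdot)\tilde{\Pi}^{1/2}$ produces $(I-P_{pfpr}+J\tilde{\Pi})^{-1}-J\tilde{\Pi}$, which is precisely $Z$ by \cref{eq:fm}.

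The main obstacle is justifying the pseudoinverse identity itself: since $M$ is not symmetric one cannot fall back on spectral-theorem intuition, so I must carefully establish that the right and left null spaces genuinely coincide (both equal $\mathrm{span}(w)$) and then use this coincidence to show that $M+ww^{\top}$ is invertible rather than merely of full rank on a subspace. Once that alignment is pinned down, the rank-one correction follows from the four Moore-Penrose axioms, and the remainder of the argument is diagonal bookkeeping.
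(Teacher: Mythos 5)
Your proof is correct, and while it reaches the same bedrock as the paper's argument---verifying the four Moore--Penrose axioms for a symmetrically conjugated matrix---it is organized around a genuinely different lemma. The paper sets $\mathfrak{T}=\tilde\Pi^{-1/2}\tilde T\tilde\Pi^{-1/2}$, $\mathcal J=\tilde\Pi^{1/2}J\tilde\Pi^{1/2}$, $\mathcal Z=\tilde\Pi^{1/2}Z\tilde\Pi^{-1/2}$, conjugates \cref{eq:fm} by $\tilde\Pi^{\pm 1/2}$, and then multiplies out $\mathcal Z+\mathcal J=(\mathfrak{T}+\mathcal J-\tilde{\mathcal D}^{-1}+I)^{-1}$ on both sides to check, one by one, that $\mathcal Z$ and $\mathfrak{T}-\tilde{\mathcal D}^{-1}+I$ satisfy all four Penrose conditions; it leans on auxiliary facts $ZJ=\mathbf 0$ and $\tilde\pi^\top Z=\mathbf 0$ coming from the series definition of $Z$. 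You instead observe at once that the inner matrix collapses to $M=\tilde\Pi^{1/2}(I-P_{pfpr})\tilde\Pi^{-1/2}$, identify $w=\tilde\Pi^{1/2}e$ (unit-length because $\sum_i\tilde\pi_i=1$) as spanning both the right and left null spaces of $M$, and then appeal to the rank-one correction identity $M^\dagger=(M+ww^\top)^{-1}-ww^\top$, which holds precisely when a rank-$(N{-}1)$ matrix has coinciding one-dimensional right and left null spaces spanned by a common unit vector. Conjugating back by $\tilde\Pi^{\mp 1/2}$ recovers \cref{eq:fm}. Your route is more modular: the Penrose-axiom verification is packed into a reusable lemma and you avoid the auxiliary series identities entirely, working only with the closed form from \cref{eq:fm}. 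The paper's route is more elementary and self-contained but less transparent about why the pseudoinverse appears. Both are correct; the point you flag---that one must genuinely establish that the left and right kernels coincide (it follows from $\tilde\pi^\top P_{pfpr}=\tilde\pi^\top$, $P_{pfpr}e=e$, and simplicity of the Perron eigenvalue), rather than invoking symmetry---is exactly the care required, and you handle it correctly.
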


The proof is given in \Cref{app:proof}. The pseudoinverse of the sparse matrix in \cref{eq:fm0} can be calculated via low-rank SVD. In particular, we use ARPACK~\cite{lehoucq1998arpack}, an iteration method based on the restarted Lanczos algorithm, as an eigensolver. ARPACK depends on matrix-vector multiplication. Usually a small number of iterations is enough, so if the matrix is sparse and the matrix-vector multiplication can be done in $\mathcal{O}(N)$ time parallelly, then the eigenvalues are found in $\mathcal{O}(N)$ time as well.

Based on the fundamental matrix defined in \cref{eq:fm0}, we can compute the hitting time matrix $\mathcal{H}$ by \cref{eq:hitting}. The commute time matrix can be obtained by $\mathcal{C} = \mathcal{H} + \mathcal{H}^\top$, and we set $\mathcal{C}(i,i) = 0$ for $i = 1,\cdots,N$. Each entry $\mathcal{C}(i,j)$ is the commute time between $v_i$ and $v_j$.
However, since $\tilde{G}_c$ is irreducible, all entries in $\mathcal{C}$ is positive, which yields a fully connected graph. It makes the subsequent propagation step computationally expensive and makes little sense for most downstream tasks. Hence, we sparsify $\mathcal{C}$ with a certain ratio $\mu \in (0,1)$. For each row of $\mathcal{C}$, we set $\mu N$ largest entries to 0. In doing so, we obtain a sparse non-negative matrix $\mathcal{C}_s$. Then, we define a graph convolutional operator based on $\mathcal{C}_s$ as:
\begin{equation}
\widehat{\mathcal{C}}_s = \mathbf{D}_s^{-1}\exp(-\mathcal{C}_s).
\label{eq:norm_cummute}
\end{equation}
where $\mathbf{D}_s$ a diagonal matrix, $\mathbf{D}_s(i,i) = \sum_j \exp(-\mathcal{C}_s)(i,j)$. Since the commute time matrix $\mathcal{C}$ is based on the DiLaplacian $\tilde{\mathbf{T}}$ of the combined graph $\tilde{G}_c$, $\widehat{\mathcal{C}}_s$ hence inherit properties of $\tilde{\mathbf{T}}$. In this way, we fuse the directed information and long-distance correlations between nodes on the feature and topology level into the convolutional operator $\widehat{\mathcal{C}}_s$.
Then we replace $\widehat{\mathcal{T}}$ in \cref{eq:digcn} with $\widehat{\mathcal{C}}_s$ and define the $\ell$-th layer of DiL-GCN$_{CT}$ as:
\begin{equation}
    H^{(\ell+1)} = \sigma(H^{(\ell)}W_0^{(\ell+1)} + (\alpha \widehat{\mathbf{A}}_{c_u} + \beta \widehat{\mathcal{C}}_s)H^{(\ell)}W_1^{(\ell+1)}).
\end{equation}


{\bf Prediction.} The class prediction $\hat{\mathbf{Y}}$ of a $L$-layer DiL-GCN or DiL-GCN$_{CT}$ is based on node embeddings in the last layer:
\begin{equation}
    \hat{\mathbf{Y}} = \mathrm{softmax}(H^{(L)}),
\end{equation}
where $W_0^{(L)}, W_1^{(L)} \in \mathbb{R}^{d^{(L-1)} \times m}$, $d^{(L-1)}$ the dimension of the $(L-1)$-th layer and $m$ the number of classes. Since our DiLaplacian and the commute time matrix can capture the underlying global correlations, the number of layers $L$ hence set to a small value.

\subsection{Time Complexity}
In the stage of constructing the combined graph $G_c$, the time complexity of the approximate similarity sorting including cosine similarity computation $\mathcal{O}(N)$ and similarity sorting $\mathcal{O}(N\log N)$. Then we compute the stationary distribution $\pi$ using power iteration, whose time complexity is $\mathcal{O}(tN)$, where $t \ll N$ is usually small. The total time complexity of an $L$-layer DiL-GCN is therefore $\mathcal{O}(N \log N + LNd^2)$. Further, the time complexity of computing the hitting matrix by the fundamental matrix based on \cref{eq:fm0} is $\mathcal{O}(N)$. The time complexity of row-wise sparsification of $\widehat{\mathcal{C}}_s$ is $\mathcal{O}(N \log N)$. In total, the DiL-GCN$_{CT}$ has the same time complexity as DiL-GCN.

\section{Experiments}
\subsection{Datasets}
We conduct experiments\footnote{Code available at \href{https://anonymous.4open.science/r/DiLGCN-CT}{https://anonymous.4open.science/r/DiLGCN-CT}.} on seven disassortative (heterophilous) graph datasets and three assortative (homophilous) graph datasets, which are widely used in previous work~\cite{pei2019geom,zhu2020beyond,chien2021adaptive}. Specifically, the disassortative datasets including three web page graphs from the WebKB dataset~\cite{pei2019geom} ({\it Texas}, {\it Wisconsin} and {\it Cornell}), three webpage graphs from Wikipedia~\cite{pei2019geom} ({\it Actor}, {\it Chameleon} and {\it Squirrel}) and a social network of European Deezer~\cite{lim2021new} ({\it deezer}). On the other hand, the assortative datasets including two citation networks ({\it CoraML}~\cite{bojchevski2018deep} and {\it Citeseer}~\cite{frasca2020sign}) and a coauthor network ({\it CoauthorCS}~\cite{shchur2018pitfalls}). In order to distinguish assortative and disassortative graph datasets, \citet{zhu2020beyond} propose the edge homophily ratio as a metric to measure the homophily of a graph $h = \frac{|\{ (v_i, v_j) \in E: y_i = y_j \}|}{|E|}$,
where $y_i$ is the label of $v_i$. This metric is defined as the proportion of edges that connect two nodes of the same class. The datasets that we used have edge homophily ratio ranging from low to high. 

For all disassortative datasets except deezer, we use the feature vectors, class labels, and 10 fixed splits (48\%/32\%/20\% of nodes per class for train/validation/test) from~\cite{pei2019geom}. For deezer, we use 5 fixed splits (50\%/25\%/25\% for train/validation/test) provided by~\cite{lim2021new}. For all assortative datasets, we use the same split as~\cite{tong2020digraph}, i.e., 20 labels per class for the training set, 500 labels for validation set and the rest for test set. The detailed information and statistics of these datasets are shown in \Cref{tab:datasets}.

\begin{table*}[ht!]
    \centering
    \caption{%
  Classification accuracy (with standard deviation) in percent. The "$*$" represents the best results among all variants of the model. For all baselines except MLP and ChebyNet, we respectively conduct experiments on symmetrized and original adjacency matrices and report the better results. \colorbox{gray!15}{\textbf{Bold}}: best; \underline{Underline}: runner-up.
    }
    \label{tab:nc_result}
    \begin{adjustbox}{width=1\textwidth}
    \begin{tabular}{ccccccccccc} %
    \toprule
            &  \texttt{\bf Texas}           &   \texttt{\bf Wisconsin}           &   \texttt{\bf Actor}           
            &   \texttt{\bf Squirrel}       &   \texttt{\bf Chameleon}           & \texttt{\bf Cornell} 
            & \texttt{\bf deezer}          &  \texttt{\bf Citeseer}       &   \texttt{\bf CoraML}            &   \texttt{\bf CoauthorCS}  \\
          \textbf{$h$} & \textbf{0.11} & \textbf{0.21} & \textbf{0.22} & \textbf{0.22} & \textbf{0.23} & \textbf{0.3} & \textbf{0.53}&  \textbf{0.74} & \textbf{0.79} & \textbf{0.81} \\
    \midrule
    MLP   & 81.89($\pm$4.78) & 85.29($\pm$3.31) & 35.76($\pm$0.98) & 31.68($\pm$1.90) &46.21($\pm$2.99) &81.89($\pm$6.40) &  66.55($\pm$0.72)     & 65.41($\pm$1.74) & 77.12($\pm$0.96)  & 83.01($\pm$1.03)  \\ 
    \midrule
    GCN   & 61.08($\pm$6.07) & 53.14($\pm$5.29) & 30.26($\pm$0.79) & 52.43($\pm$2.01) & 67.96($\pm$1.82) &61.93($\pm$3.67)  &  62.23($\pm$0.53) & 66.03($\pm$1.88) &  81.18($\pm$1.25) & 91.33($\pm$0.45)\\
    GAT+JK & 71.08($\pm$5.41) & 72.16($\pm$3.37) & 33.72($\pm$0.95) & 46.57($\pm$1.77)& 58.35($\pm$1.89) &70.54($\pm$4.43) & 60.37($\pm$2.01) & 65.33($\pm$0.97) & 68.31($\pm$2.12) & 87.42($\pm$0.57)\\
    GCN+JK & 69.19($\pm$6.53) & 74.31($\pm$6.43) & 34.18($\pm$0.85) & 40.45($\pm$1.61) & 63.42($\pm$2.00) &64.59($\pm$8.68)&  60.99($\pm$0.14)& 64.20($\pm$1.81)& 80.16($\pm$0.87) & 89.60($\pm$0.22) \\
    ChebyNet & 79.22($\pm$7.51) & 81.63($\pm$6.31)  &28.27($\pm$1.99) & 32.51($\pm$1.46) & 59.17($\pm$1.89) &79.84($\pm$5.03)& 67.02($\pm$0.59) &66.71($\pm$1.64)& 80.03($\pm$1.82) & 91.20($\pm$0.40)\\
    GAT   & 58.92($\pm$5.10) & 61.37($\pm$5.27) & 26.28($\pm$1.73) & 40.72($\pm$1.55) & 60.69($\pm$1.95) & 62.73($\pm$3.68) &    
    61.09($\pm$0.77)& 67.58($\pm$1.39) &80.41($\pm$1.77) & 90.82($\pm$0.48) \\
    GraphSAGE & 83.92($\pm$6.14) & 85.07($\pm$5.56) & 34.23($\pm$0.99) & 41.61($\pm$0.74) & 58.73($\pm$1.68) &  80.09($\pm$6.29) & 64.28($\pm$1.13)
    &66.81($\pm$1.38) &80.03($\pm$1.70) & 90.15($\pm$1.03) \\
    APPNP & 79.57($\pm$5.32) & 81.29($\pm$2.57) & \underline{35.93}($\pm$1.04) & 51.91($\pm$0.56) & 45.37($\pm$1.62) & 70.96($\pm$8.66) &  \underline{67.21}($\pm$0.56) & 66.90($\pm$1.82) & 81.31($\pm$1.47) & 87.24($\pm$1.06)\\
    AM-GCN & 74.57($\pm$6.45) & 76.42($\pm$5.62) & 32.47($\pm$0.66) & 46.88($\pm$1.30) & 60.10($\pm$2.57) & 79.63($\pm$8.10) & 64.28($\pm$0.89) & 67.89($\pm$2.30) & 81.97($\pm$1.62) & 91.73($\pm$0.65)\\
    \midrule
    MixHop & 77.84($\pm$7.73) & 75.88($\pm$4.90) & 32.22($\pm$2.34) &43.80($\pm$1.48)& 60.50($\pm$2.53) & 73.51($\pm$6.34)& 66.80($\pm$0.58)& 56.09($\pm$2.08)  & 65.89($\pm$1.50) &  88.97($\pm$1.04) \\
    FAGCN & 82.43($\pm$6.89) & 82.94($\pm$7.95)& 34.87($\pm$1.25)  &42.59($\pm$0.79) &55.22($\pm$3.19) &79.19($\pm$9.79) & 65.88($\pm$0.31)&  \underline{68.93}($\pm$1.17)  & \cellcolor{gray!15}\textbf{84.00}($\pm$1.05) & 91.07($\pm$1.28)\\
    H$_2$GCN-1 & 84.86($\pm$6.77) & 86.67($\pm$4.69) & 35.86($\pm$1.03) & 36.42($\pm$1.89) & 57.11($\pm$1.58) & 82.16($\pm$4.80) & \cellcolor{gray!15}\textbf{67.49}($\pm$1.18) &  64.57($\pm$2.06)    & 80.66($\pm$0.97) & 88.45($\pm$0.97) \\
    H$_2$GCN-2 & 82.16($\pm$5.28) & 85.88($\pm$4.22) & 35.62($\pm$1.30) & 37.90($\pm$2.02) & 59.39($\pm$1.98) & 82.16($\pm$6.00) &  65.04($\pm$0.73)     & 67.15($\pm$0.99) & 78.33($\pm$1.29)&88.53($\pm$0.38) \\
    CPGNN* & 82.63($\pm$6.88) & 84.58($\pm$2.72) & 35.76($\pm$0.92) & 29.25($\pm$4.17) & 65.17($\pm$3.17) & 79.93($\pm$6.12) &  58.26($\pm$0.71)     &   66.19($\pm$1.74)    & 81.02($\pm$0.77) &89.20($\pm$0.91) \\
    GPRGNN & 84.43($\pm$4.10) & 83.73($\pm$4.02) & 33.94($\pm$0.95) & 50.56($\pm$1.51) & 66.31($\pm$2.05) & 79.27($\pm$6.03) &  66.90($\pm$0.50) & 61.74($\pm$1.87) & 73.31($\pm$1.37) & 91.49($\pm$0.39) \\
    GCNII & 77.57($\pm$3.83) & 80.39($\pm$3.40) & 34.52($\pm$1.23) & 38.47($\pm$1.58) & 63.86($\pm$3.04) & 77.86($\pm$3.79) &   66.18($\pm$0.93)    & 58.32($\pm$1.93) & 64.72($\pm$2.85)&84.13($\pm$1.91) \\
    \midrule
    DGCN  & 71.53($\pm$7.22) & 65.52($\pm$4.71) & 33.74($\pm$0.25) & 37.16($\pm$1.72) & 50.77($\pm$3.31) & 68.32($\pm$4.30) & 62.11($\pm$2.14) & 66.37($\pm$1.93) &78.36($\pm$1.41) &88.41($\pm$0.68)   \\
    DiGCN & 65.18($\pm$8.09) & 60.06($\pm$3.82) & 32.45($\pm$0.78) & 34.76($\pm$1.24) & 50.55($\pm$3.38) &67.75($\pm$6.11)& 58.37($\pm$0.89)& 63.77($\pm$2.27) & 79.51($\pm$1.34)  & OOM \\
    DiGCN-IB & 66.97($\pm$13.72) & 64.19($\pm$7.01) & 32.82($\pm$0.68) & 33.44($\pm$2.07) & 50.37($\pm$4.31) & 65.01($\pm$10.33)& 55.39($\pm$2.88) & 64.99($\pm$1.72) & 81.07($\pm$1.14) & 91.09($\pm$0.32) \\
    \midrule
    {\bf DiL-GCN}& \cellcolor{gray!15}\textbf{87.84}($\pm$3.25) & \underline{86.86}($\pm$4.21)& 35.57($\pm$1.10) & \underline{54.58}($\pm$2.30) & \underline{70.70}($\pm$2.07) & \underline{85.41}($\pm$5.56) & 66.52($\pm$0.17)  & \cellcolor{gray!15}\textbf{68.97}($\pm$1.38)& \underline{82.30}($\pm$0.81)  &   \underline{92.06}($\pm$0.79)\\
    {\bf DiL-GCN$_{CT}$} &   \underline{86.52}($\pm$3.66)    &  \cellcolor{gray!15}\textbf{87.06}($\pm$4.03)     &   \cellcolor{gray!15}\textbf{36.10}($\pm$1.02) &\cellcolor{gray!15} \textbf{56.22}($\pm$2.09)  & \cellcolor{gray!15}\textbf{72.19}($\pm$1.25) &  \cellcolor{gray!15}\textbf{87.30}($\pm$7.18) & 66.90($\pm$0.72) & 67.85($\pm$1.68) & 81.82($\pm$0.64) & \cellcolor{gray!15}\textbf{92.15}($\pm$0.52)\\
	\bottomrule
    \end{tabular}
    \end{adjustbox}
\end{table*}

\subsection{Baselines}
We compare our DiL-GCN and DiL-GCN$_{CT}$ against 19 baselines. The full list of methods are:
\begin{itemize}
\item Structure-independent: 2-layer MLP.
\item General GNNs: GCN~\cite{kipf2017semi}, ChebyNet~\cite{defferrard2016convolutional}, GAT~\cite{velickovic2018graph}, GraphSAGE~\cite{hamilton2017inductive}, APPNP~\cite{klicpera_predict_2019}, and jumping knowledge networks (GCN+JK, GAT+JK)~\cite{xu2018representation}.
\item Digraph GNNs: DGCN~\cite{tong2020directed}, DiGCN and DiGCN-IB~\cite{tong2020digraph}.
\item Feature similarity preserving GNN: AM-GCN~\cite{wang2020gcn}.
\item Non-homophilous GNNs: MixHop~\cite{abu2019mixhop}, H$_2$GCN-1/2~\cite{zhu2020beyond}, FA\\GCN~\cite{fagcn2021}, CPGNN~\cite{zhu2020graph}, GPRGNN~\cite{chien2021adaptive}, and GCNII~\cite{chenWHDL2020gcnii}.
\end{itemize}
For all baselines, we report their performance based on their official implementations after careful hyperparameter tuning. 

\begin{table}[h!]
\begin{threeparttable}
	\small
	\centering
	\caption{Statistics of the datasets.}
	\label{tab:datasets}
    \centering
		\begin{tabular}{ccccccccc} 
			\toprule			
			Dataset & $N$ & $|E|$ & \# Feat.&  \# Classes & Digraph\\
			\midrule
			Texas\tnotex{fn:texas}           & 183     & 309    & 1,703 & 5        &\checkmark   \\
			Wisconsin\tnotex{fn:texas}       & 251     & 499    & 1,703 & 5       &\checkmark  \\
			Cornell\tnotex{fn:texas}         & 183     & 295    & 1,703 & 5       &\checkmark\\
			Chameleon\tnotex{fn:texas}       & 2,277    & 36,101  & 2,325 & 5      &\checkmark  \\
			Squirrel\tnotex{fn:texas}        & 5,201    & 217,073 & 2,089 & 5    &\checkmark \\
			Actor\tnotex{fn:texas}           & 7,600    & 33,544  & 931  & 5        &\checkmark\\
			deezer\tnotex{fn:deezer}          & 28,281   & 92,752  &31,241 & 2       \\
			Cora-ML\tnotex{fn:coraml}         & 2,995    & 8,416   & 2,879  & 7      &\checkmark \\
			Citeseer\tnotex{fn:citeseer}        & 3,312    & 4,715   & 3,703  & 6       &\checkmark\\
			CoauthorCS\tnotex{fn:coauthor-cs}      & 18,333   & 81,894  & 6,805  & 15      \\
			\bottomrule
	\end{tabular}
		\begin{tablenotes}[flushleft]
	\scriptsize{
		\item[1] \label{fn:texas} \url{https://github.com/graphdml-uiuc-jlu/geom-gcn/tree/master/splits}
		\item[2] \label{fn:deezer} \url{https://github.com/CUAI/Non-Homophily-Benchmarks/blob/main/data/splits/deezer-europe-splits.npy}
		\item[3] \label{fn:coraml} \url{https://github.com/abojchevski/graph2gauss/raw/master/data/cora_ml.npz}
		\item[4] \label{fn:citeseer} \url{https://github.com/abojchevski/graph2gauss/raw/master/data/citeseer.npz}
		\item[5] \label{fn:coauthor-cs} \url{https://github.com/shchur/gnn-benchmark/raw/master/data/npz/ms_academic_cs.npz}

	}
	\end{tablenotes}
	\end{threeparttable}
	\addtocounter{footnote}{+5}
\end{table}

\subsection{Experimental Setting}\label{sec:exp_setting}
For undirected graphs (deezer and CoauthorCS), i.e., all edges are bidirectional in the provided raw data, we directly use the original adjacency matrix in all baselines. In the experiments of digraph datasets, for ChebyNet as a spectral method, we use the symmetrized adjacency matrix. For other baselines, we apply both the symmetrized and asymmetric adjacency matrix for node classification. The results reported are the better of the two results. Note that GCN is a spectral method, but it can be interpreted from the spatial perspective, i.e., outgoing neighbor aggregation with specific weights $\frac{1}{\sqrt{d_i d_j}}$. Hence, we view GCN as a spatial method. 
Other implementation details including running environment, hyperparameter settings and search space are presented in \Cref{app:id}. 

\begin{figure*}[!tb]
\centering
\subfloat[Citeseer]{{\includegraphics[width=0.31\linewidth]{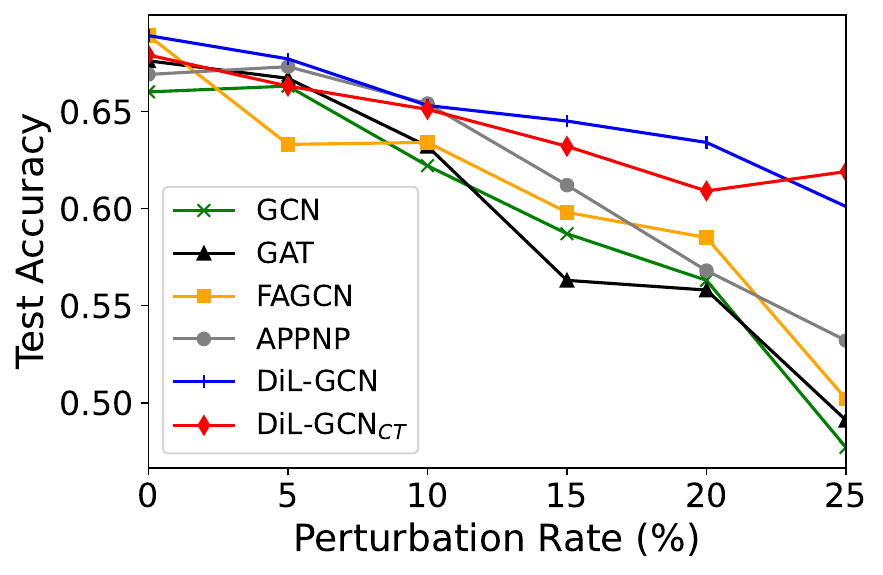} }\label{fig:att_citeseer}}
\subfloat[CoraML]{{\includegraphics[width=0.31\linewidth]{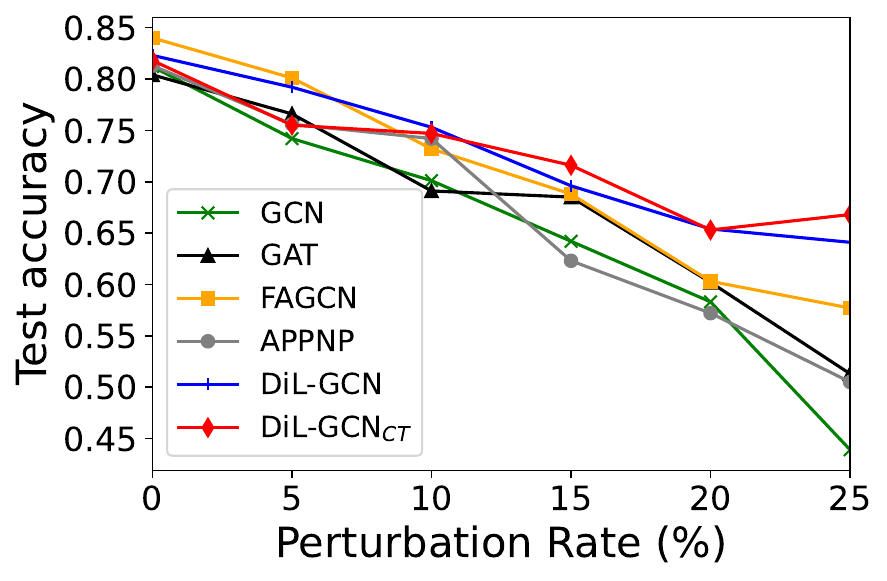} }\label{fig:att_cora}}
\subfloat[Chameleon]{{\includegraphics[width=0.31\linewidth]{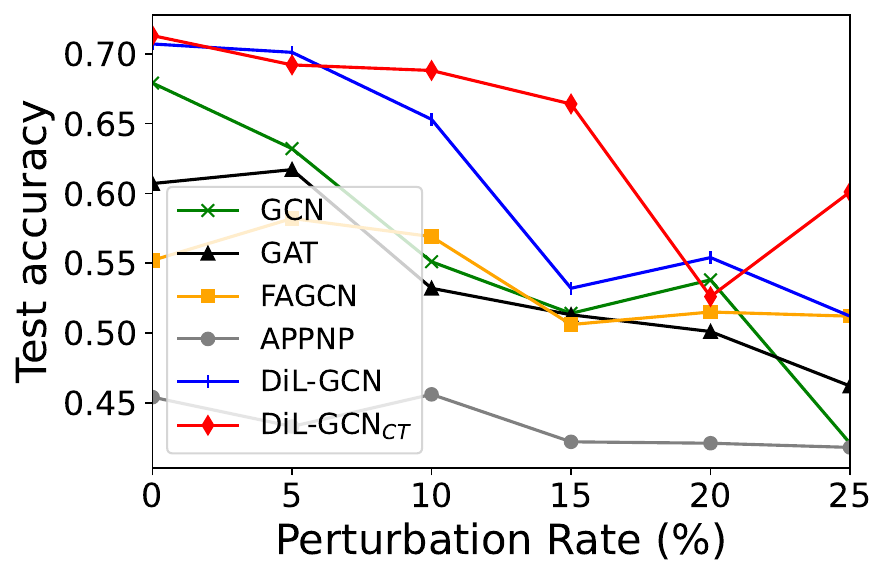} }\label{fig:att_chame}}
\caption{Node classification performance under structural attack.} 
\label{fig:att_defence}
\end{figure*}

\subsection{Node Classification}\label{sec:nc}
\Cref{tab:nc_result} reports the results of node classification on all ten graph datasets. We can see that our method DiL-GCN and DiL-GCN$_{CT}$ achieve new state-of-the-art results on 8 out of 10 datasets, and comparable results on two other datasets. Specifically, on datasets with strong heterophily ($h<0.5$), DiL-GCN$_{CT}$ achieves the best results on 5 out of 6 datasets and DiL-GCN achieves the best result on 1 dataset and second-best results on 4 datasets. In particular, on Texas, Squirrel, Chameleon and Cornell, we achieve 87.84\%, 54.22\%, 72.19\% and 87.30\% accuracies respectively, which are 2.98\%, 3.79\%, 4.23\% and 5.14\% relative improvements over previous state-of-the-art. Such strong performance on heterophilous graphs demonstrates the efficacy of our proposed framework based on DiLaplacian, and enhancing node proximity with commute time can further improve accuracy. On the dataset with intermediate heterophily ($h \approx 0.5$), our methods are competitive. Specifically, on deezer, H$_2$GCN and APPNP achieve the top two results, and DiL-GCN$_{CT}$ is the third. For homophilous graphs ($h > 0.7$), DiL-GCN achieves the best results on Citeseer and the second-best on CoraML and CoauthorCS, while DiL-GCN$_{CT}$ is sometimes inferior to DiL-GCN. We think the main reason is that for graphs with strong homophily, information from local neighbors is quite important because they contain sufficient intra-class information. Although the commute time provides a global perspective to find nodes with higher topological relevance, high-order neighbors may introduce some noise to the model in some homophilous graphs. Overall, most datasets highly benefit from every component we design. 

We make other observations as follows. Firstly, GCN+JK, GAT+JK and MixHop can not always outperform vanilla 2-layer GCN and GAT on heterophilous graphs, which suggests that coarsely aggregating distant neighbors is ineffective in filtering useless information for prediction. Secondly, MLP is a strong baseline on several heterophilous graphs (Texas, Wisconsin, and deezer), but it performs extremely poorly on other heterophilous graphs and all homophilous graphs, because it totally ignores the graph topology, which is important for both homophily and heterophily. Although AM-GCN tries to find a balance between topology and features by jointly training $k$NN graphs and original graphs, it can not consistently defeat MLP. We think the reason is AM-GCN can not fully utilize structural features, such as directionality and high-order neighbor information. Thirdly, APPNP, GPRGNN, and our models both are extensions of PageRank techniques. However, APPNP and GPRGNN which are respectively based on Personalized PageRank and Generalized PageRank highly depend on the original graph structure, such that they inevitably suffer from heterophily. Although they have a certain resistance to over-smoothing, they need deeper layers and all neighbors within per-hop are considered during message passing resulting in sub-optimal performance. 

\subsection{Robustness}
Besides naturally-occurring heterophily, heterophilous interactions may also be introduced as adversarial perturbations. Prior study~\cite{zhu2021relationship} on the robustness of GNNs reveals a relation between the vulnerabilities of GNNs to adversarial attacks and the increased presence of heterophily in perturbed graphs, i.e., imposing adversarial noise on the graph leads to a significant drop in the performance of GNNs is equivalent to modifying the original graph such that the degree of heterophily increases. 

Recall that our methods provide two strategies, which are considering the directionality of the graph and underlying long-distance correlations between nodes, to greatly improve the performance on node classification tasks when graph structure is not reliable. This motivates us to examine the potential benefit of our models on adversarial robustness. In this paper, we focus on perturbing the structure by adding or deleting edges, and evaluating the robustness of our methods on the node classification task. Specifically, we use metattack~\cite{zugner_adversarial_2019} to perform non-targeted attack, and follow the same experimental setting as~\cite{jin2021node}, i.e., the ratio of changed edges, from 0 to 25\% with a step of 5\%. We use GCN, GAT, APPNP and FAGCN as baselines and use the default hyperparameter settings in the authors' implementations. The hyperparameters of our methods are the same with \Cref{sec:nc}. We conduct the experiments on CoraML, Citeseer and Chameleon and report results in \Cref{fig:att_defence}. 

From \Cref{fig:att_citeseer} and \Cref{fig:att_cora}, we can observe that all methods have similar downward trends. On Citeseer, our DiL-GCN achieves the best defensive effect under 5\%$\sim$20\% perturbation rate. At 25\% perturbation rate, DiL-GCN$_{CT}$ surpasses DiL-GCN. On CoraML, FA-GCN achieves the best results under 5\% perturbation rate, our models show the best defensive performance when the rate  is above 10\%. FAGCN is competitive on this dataset from 5\% to 15\% perturbation rate. We can find that the class labels of CoraML are highly related to node feature according to the results of MLP in \Cref{tab:nc_result}, that is the reason that Feature-aware PageRank can help improve the robustness of our methods. From \Cref{fig:att_chame}, we can find that DiL-GCN$_{CT}$ has more than 10\% improvement over DiL-GCN and other baselines under 15\% perturbation rate, which demonstrates that the strategy of considering node proximity based on commute times can boost model robustness on the heterophilous graph.

\subsection{Component Analysis}  
\textit{\textbf{Directed v.s. Undirected.}} To verify the importance of considering the directionality, we study the values of the learnable parameters $\alpha$ and $\beta$ in DiL-GCN and DiL-GCN$_{CT}$. $\alpha$ and $\beta$ are used to build the combinatorial Laplacian to adaptively learn a `soft' directionality of the graph. If we manually set $\beta = 0$ and $\alpha = 1$, DiL-GCN and DiL-GCN$_{CT}$ will degrade into GraphSAGE applied on the undirected version of $G_c$ with mean aggregator, called GraphSAGE-${G_{c_u}}$. Next, we show the performance of GraphSAGE-${G_{c_u}}$, GraphSAGE and our models on Cornell, Chameleon and CoraML datasets in \Cref{tab:abl_dir}. Comparing the results of GraphSAGE-${G_{c_u}}$ and our DiL-GCN and DiL-GCN$_{CT}$, we can find that the convolutional operator either $\widehat{\mathcal{T}}$ or $\widehat{\mathcal{C}}_s$ plays an important role and can significantly influence the prediction accuracy. Besides, GraphSAGE-${G_{c_u}}$ consistently outperforms GraphSAGE, which proves that the structure of $G_c$ contains more useful local information, and considering long-distance correlations on the feature level is positive for both homophilous and heterophilous graphs.

\begin{table}[h]
\centering
\small
\caption{Experimental results for analyzing directionality.}
	\begin{adjustbox}{width=0.8\linewidth}

	\begin{tabular}{cccccccc}
	\toprule
	&\textbf{Cornell} & \textbf{Chameleon} & \textbf{CoraML}  \\ 
	\midrule
	GraphSAGE&	    80.09  &  58.73  & 80.03  \\    
	GraphSAGE-${G_{c_u}}$ & 82.67 & 63.38 & 81.57 \\
	\midrule
	DiL-GCN  &   85.41     & 70.70  & 82.30 \\ 
	DiL-GCN$_{CT}$ & 87.30  & 72.19  & 81.82 \\ 
	\bottomrule
	\end{tabular}
	\end{adjustbox}
\label{tab:abl_dir}
\end{table}

From \Cref{tab:abl_dir}, we can only know the `soft' treatment of directionality can improve the graph learning, however, how much does it matter is unclear. Hence, we further compare values of the learnable $\alpha$ and $\beta$ when achieving the best average accuracy on the validation set. It can reflect the bias of directionality and the influence of direction structure on the classification accuracy after training. From \Cref{fig:ablation_dir}, we observe that the learned $\alpha$ and $\beta$ vary across different datasets, and in most cases DiLaplacian-based convolutional operators $\widehat{\mathcal{T}}$ and $\widehat{\mathcal{C}}_s$ have greater contributions to the final representations than that based on the symmetrized adjacency matrix, while both of them have a non-negligible effect on the final results. It further proves that rather than absolutely treat the graph as directed or undirected, a soft adaptive directionality is more appropriate.

\begin{figure}[h]
	\centering
    \subfloat[DiL-GCN]{{\includegraphics[width=0.495\linewidth]{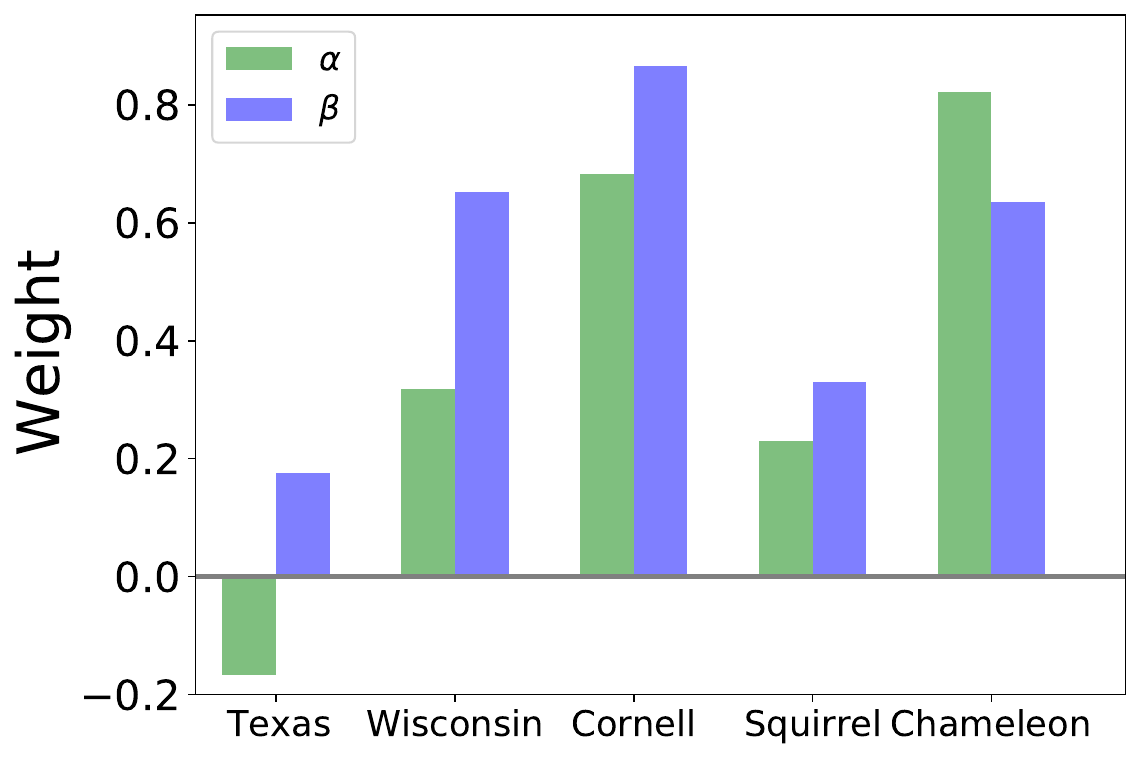}\label{fig:abl_dir_diglacian}}}
    \subfloat[DiL-GCN$_{CT}$]{{\includegraphics[width=0.475\linewidth]{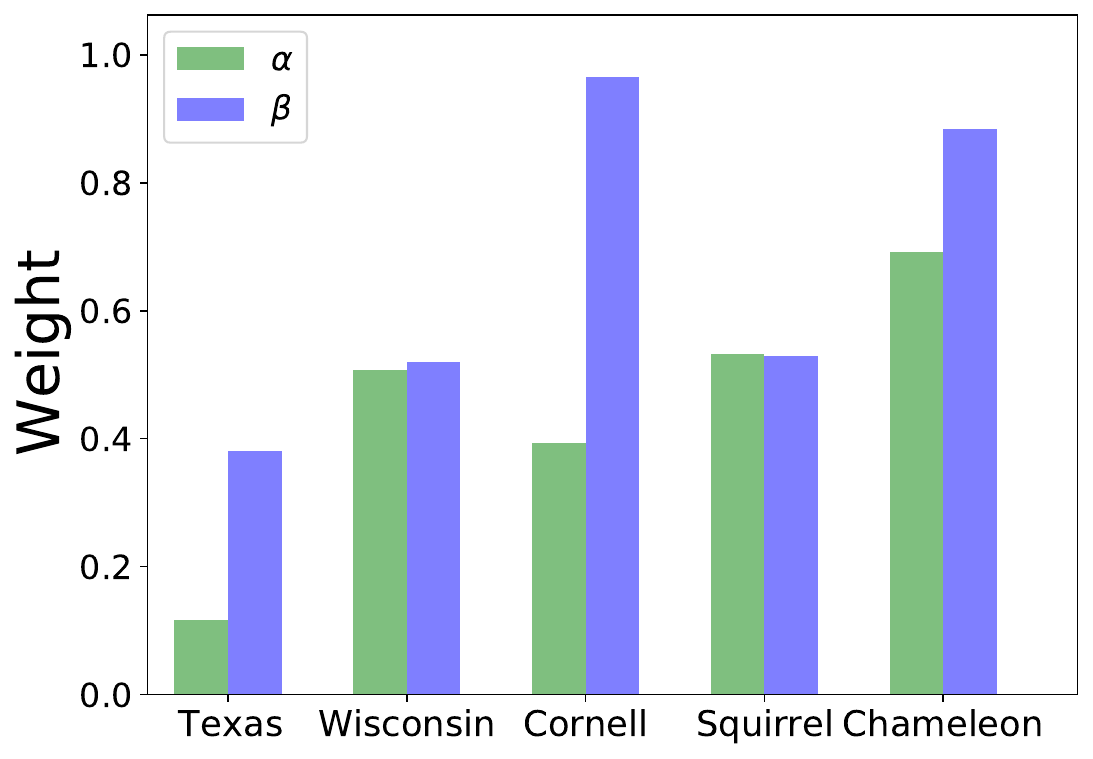}
    }\label{fig:para_wis}}
    \caption{Values of $\alpha$ and $\beta$ when the validation set achieves the best accuracy.}
\label{fig:ablation_dir}
\end{figure}

\textit{\textbf{Feature-aware PageRank v.s. PageRank.}} In our models, we propose to construct a combined graph $\tilde{G}_c$ based on Feature-aware PageRank to guarantee the irreducibility and aperiodicity, so that the DiLaplacian can be defined. On the other hand, the classic PageRank transition matrix $\mathbf{P}_{pr}=\gamma \mathbf{P} + (1-\gamma) \frac{e e^\top}{N}$ is also I\&A and it yields a dense and feature-independent convolutional operator. To further verify the importance of preserving proximity in the feature space, we replace $\tilde{\mathbf{P}}_{fpr}$ and $\Pi$ in \cref{eq:aug_DiLaplacian} to $\mathbf{P}_{pr}$ and its stationary distribution, and $\tilde{\mathbf{D}}_{c}$ is replaced with the out-degree matrix of the original graph. We use `(w/o feat.)' to represent this variant. This variant can help us verify the effectiveness of FPR. Further, we apply FPR to APPNP (APPNP$_{fpr}$) to test the transferrability of FPR. Results are shown in \Cref{tab:abl_page}, from which we can conclude that the feature-independent variants is sub-optimal, and FPR can be seen as a plug-and-play module so that can be transferred to other PageRank-based models flexibly. 

\begin{table}[h!]
\centering
\small
\caption{Impact of feature-aware PageRank.}
	\begin{adjustbox}{width=0.85\linewidth}
\begin{tabular}{cccccccc}
	\toprule
	&\textbf{Cornell} & \textbf{Chameleon} & \textbf{CoraML}\\
	\midrule
    APPNP &      70.96 &  45.37 &  81.31                  \\
    APPNP$_{fpr}$ & 75.83 & 48.64 & 82.37      \\
	DiL-GCN(w/o fpr.) & 84.03  & 59.65 & 79.59       \\
	DiL-GCN$_{CT}$(w/o fpr.)& 84.59  & 60.84     & 78.57           \\
	DiL-GCN		                & 85.41   & 70.70     & 82.30           \\
	DiL-GCN$_{CT}$     	            & 87.30  & 71.33     & 81.82        \\
	\bottomrule
\end{tabular}
	\end{adjustbox}
\label{tab:abl_page}
\end{table}



\textit{\textbf{Parameter Sensitivity.}} In \Cref{sec:fpr}, we introduce the approximate similarity sorting to generate an irreducible graph $G_s$ and then combine $G_s$ with the original graph $G$ to construct an irreducible combined graph $G_c$, where $window\_size$ is set to a small value to guarantee sparsity. Hence, we explore the effect of $window\_size$ on the performance of node classification task. We choose $window\_size$ from 1 to 5 and conduct experiments with different $window\_size$ on 4 datasets, and report results in \Cref{fig:para_sen}. We observe that with the increase of $window\_size$, the performance increases until reaches at a peak and then decreases. This is reasonable as suitable number of distant nodes are capable of incorporating proper global information which makes nodes more discriminative, while excessive distant nodes may introduce noise which slow down the training speed due to density and hurt the generalization ability. Thus, $window\_size$ should be carefully decided to achieve the optimal performance on the validation set.

\begin{figure}[htbp]
\begin{minipage}[t]{1\linewidth}
\centering
\subfloat[Cornell]{{\includegraphics[width=0.25\linewidth]{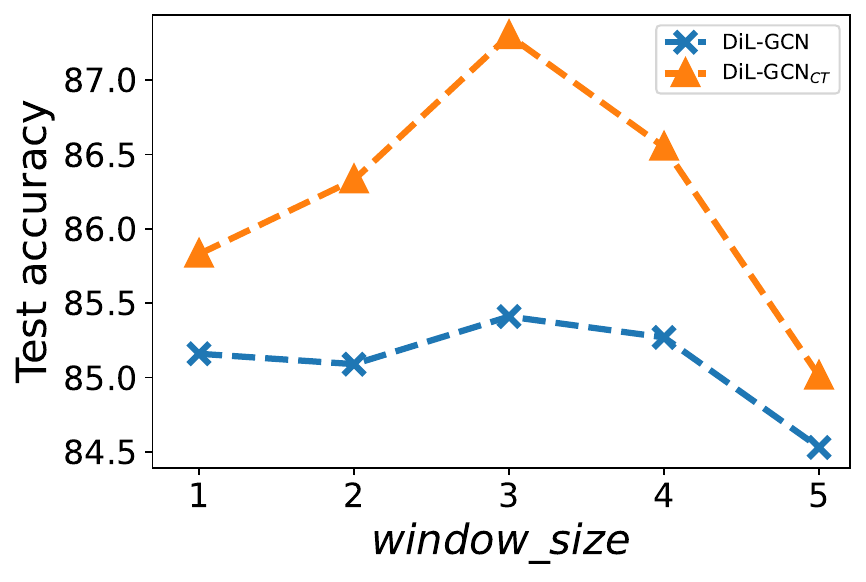} }\label{fig:para_texas}}
\subfloat[Wisconsin]{{\includegraphics[width=0.25\linewidth]{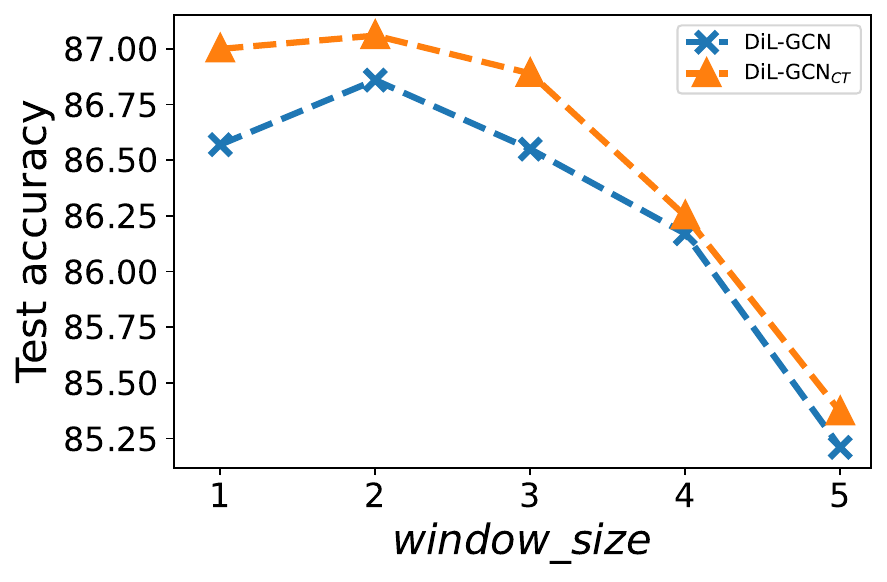}}\label{fig:para_wis}}
\subfloat[CoraML]{{\includegraphics[width=0.25\linewidth]{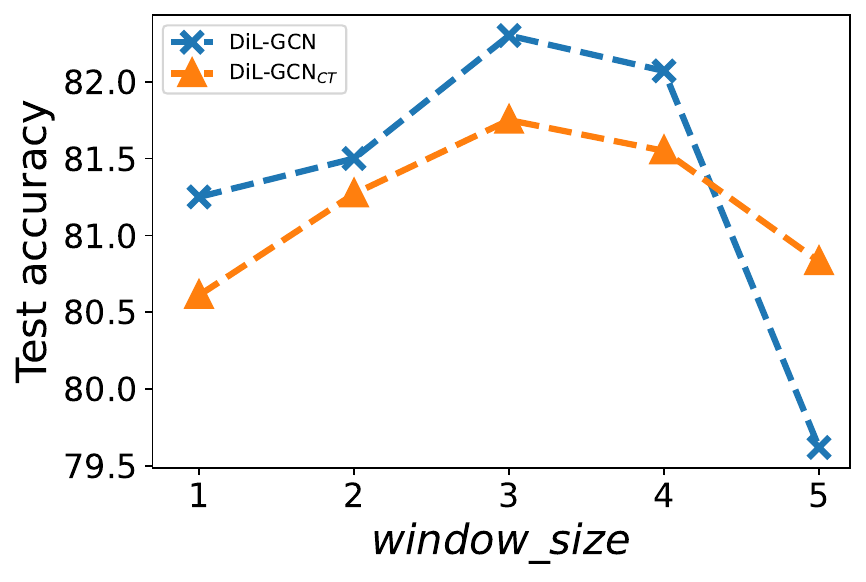} }\label{fig:para_cora}}
\subfloat[Squirrel]{{\includegraphics[width=0.25\linewidth]{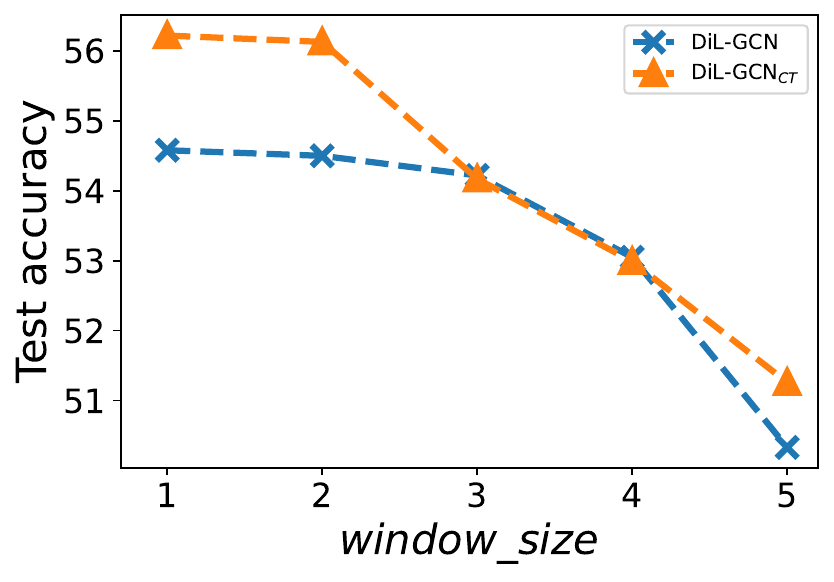} }\label{fig:para_squ}}
\caption{Classification results under different $window\_size$.} 
\label{fig:para_sen}
\end{minipage}
\end{figure}

\section{Conclusion}
Our work aims to improve the performance of GNN on heterophilous graphs by considering directionality and long-distance correlations. We generalize the graph Laplacian to digraph by defining DiLaplacian based on the proposed Feature-aware PageRank, such that the direction structure and feature-level long-distance correlations can be preserved simultaneously. The graph convolutional operator is based on the DiLaplacian and the symmetrized adjacency matrix such that the model can adaptively learn the directionality of the graph. Further, we define a measure of node proximity based on commute times and combine it into our model, which can capture the long-distance correlations between nodes on the topology level. The extensive experiments confirm the effectiveness and robustness of our models.

\balance
\bibliographystyle{ACM-Reference-Format}
\bibliography{reference}

\newpage
\appendix

\section{Heterophily $\neq$ Heterogeneity} \label{app:notation}
We clarify several easily-confounding notations whose prefix is `Hetero-'. This paper studies the graph with {\bf Heterophily} (i.e., heterophilous graph), which is a pattern of the structure-label correlation, i.e., adjacent nodes belong to different classes. Whereas homophily means that adjacent nodes belong to the same class.

{\bf Heterogeneity} is a network concept that is completely unrelated to heterophily. Formally, a network is heterogeneous if it has at least two types of nodes and different relationships between them such as knowledge graphs, and homogeneous if it has a single type of nodes and a single type of edges~\cite{zhu2020beyond}.
\section{Spectral Convolution of DiL-GCN} \label{app:spectral}
Since $\tilde{\mathbf{P}}_{fpr} \in \mathbb{R}^{N \times N}$ is a transition matrix of the combined graph $\tilde{G}_c$, following the inherent property of Markov chain~\cite{poole2014linear}, the eigenvalues $\tilde{\mathbf{P}}_{fpr}$ are bounded in $(-1,1]$. We first analyze the eigenvalues of $\mathcal{T}$. For clarity, we rewrite $\mathcal{T}$ in \cref{eq:norm_dig} as:
\begin{equation}
\mathcal{T} = \tilde{\mathbf{D}}_{c}^{-1} - \frac{1}{2} \left(\Pi^{\frac{1}{2}} \tilde{\mathbf{P}}_{fpr} \Pi^{-\frac{1}{2}} + \Pi^{-\frac{1}{2}} \tilde{\mathbf{P}}_{fpr}^\top \Pi^{\frac{1}{2}}\right).
\end{equation}
Let $\tilde{\mathbf{P}}_{fpr}^* = \Pi^{\frac{1}{2}} \tilde{\mathbf{P}}_{fpr} \Pi^{-\frac{1}{2}}$, then $\tilde{\mathbf{P}}_{fpr} = \Pi^{-\frac{1}{2}} \tilde{\mathbf{P}}_{fpr}^* \Pi^{\frac{1}{2}}$. Let $\lambda$ denote a eigenvalue of $\tilde{\mathbf{P}}_{fpr}$ with the eigenvector $x$, i.e., $\tilde{\mathbf{P}}_{fpr} x = \lambda x$. Then, $\Pi^{-\frac{1}{2}} \tilde{\mathbf{P}}_{fpr}^* \Pi^{\frac{1}{2}} x = \lambda x$, and $\tilde{\mathbf{P}}_{fpr}^* \Pi^{\frac{1}{2}} x = \lambda \Pi^{\frac{1}{2}} x$. Therefore, eigenvalues of $\tilde{\mathbf{P}}^*_{fpr}$ are the same with $\tilde{\mathbf{P}}_{fpr}$ ranging in $(-1,1]$, with eigenvectors $\Pi^{\frac{1}{2}} x$. Thus, the eigenvalues $\lambda_{\mathcal{T}}$ of $\mathcal{T}$ is lower bounded by $\frac{1}{\tilde{d}_{max}} - 1$ and upper bounded by $\frac{1}{\tilde{d}_{min}} + 1$.

Since $\mathcal{T}$ is a real symmetric matrix, it has a complete set of orthonormal eigenvectors $\boldsymbol{U}=\left(\boldsymbol{u}_{1}, \boldsymbol{u}_{2}, \ldots, \boldsymbol{u}_{n}\right)$. Let $\boldsymbol{\Lambda}$ be a diagonal matrix of eigenvalues with $\boldsymbol{\Lambda}(k,k) = \lambda_{\mathcal{T}}(k)$. Since $\boldsymbol{U}$ is unitary, taking the eigenvectors of $\mathcal{T}$ as a set of bases, graph Fourier transform of a signal $f \in \mathbb{R}^N$ on $\tilde{G}_c$ is defined as $\hat{f}=\boldsymbol{U}^{\top} f$, so the inverse graph Fourier transform is:
\begin{equation}
    f=\boldsymbol{U}^{\top} \hat{f} = \sum^N_{k = 1} \hat{f}(k) \boldsymbol{u}_k.
\end{equation}
According to convolution theorem, convolution in Euclidean space corresponds to pointwise multiplication in the Fourier basis. Denoting with $g$ the convolution kernel, the convolution of $f$ with the filter $g$ in the Fourier domain can be defined by:
\begin{equation}
    f * g=\boldsymbol{U}\left(\left(\boldsymbol{U}^{\top} g\right) \odot\left(\boldsymbol{U}^{\top} f\right)\right),
    \label{eq:spectral_conv}
\end{equation}
where $\odot$ is the element-wise Hadamard product, and $*$ is the graph convolution operator. Since the filter $g$ is free, the vector $\boldsymbol{U}^{\top} g$ can be any vector. Thus, the filter can be replaced by a diagonal matrix $\boldsymbol{\Sigma} = diag(\boldsymbol{\theta})$ parameterized by $\boldsymbol{\theta} \in \mathbb{R}^N$. We can rewrite \cref{eq:spectral_conv} as $\boldsymbol{U} \boldsymbol{\Sigma} \boldsymbol{U}^{\top} f$.

To reduce the number of tranable parameters to prevent overfitting and avoid explicit diagonalization of the matrix $\mathcal{T}$. Following ChebyNet~\cite{defferrard2016convolutional}, which restricts spectral convolution kernel $\boldsymbol{\Sigma}$ to a polynomial expansion of $\boldsymbol{\Lambda}$ and approximates the graph spectral convolutions by a truncated expansion in terms of Chebyshev polynomials up to $K$-th order, we define a normalized eigenvalue matrix, with entries in $(-1,1)$, by $\tilde{\boldsymbol{\Lambda}} = \frac{\boldsymbol{\Lambda}-\tilde{\mathbf{D}}_{c}^{-1}}{2}$, which can be verified by $ \frac{1}{\tilde{d}_{max}} - 1 \leq \lambda_{\mathcal{T}} \leq \frac{1}{\tilde{d}_{min}} + 1$. Then the graph convolution can be approximated by
\begin{equation}
    \boldsymbol{U} \Sigma \boldsymbol{U}^{T} f \approx \sum_{k=0}^{K} \theta_{k} T_{k}\left(\frac{\mathcal{T} - \tilde{\mathbf{D}}_{c}^{-1}}{2}\right)f,
\end{equation}
where $T_k(\cdot)$ is the kth order matrix Chebyshev polynomial defined by $T_0(x) = 1$, $T_1(x) = x$, and $T_{k}(x)=2 x T_{k-1}(x)+T_{k-2}(x)$ for $k \geq 2$, $\theta_0, \cdots,\theta_k$ real-valued parameters. Let $\boldsymbol{Y} = \boldsymbol{U} \Sigma \boldsymbol{U}^{T}$, we employ an affine appoximation ($K=1$) with coefficients $\theta_0 = \theta$ and $\theta_1 = 2\theta$, from which we attain the graph convolution operation:
\begin{equation}
    \boldsymbol{Y} f \approx \theta \left(\mathbf{I} + \frac{1}{2} \left(\Pi^{\frac{1}{2}} \tilde{\mathbf{P}}_{fpr} \Pi^{-\frac{1}{2}} + \Pi^{-\frac{1}{2}} \tilde{\mathbf{P}}_{fpr}^\top \Pi^{\frac{1}{2}}\right)\right)f
    \label{eq:basic_conv}
\end{equation}
In our final form, we replace identity matrix $I$ in \cref{eq:basic_conv} with $\tilde{\mathbf{D}}_{c}^{-1}$ as shown in \cref{eq:aug_DiLaplacian}. The reason is that in \cref{eq:basic_conv} the transition matrix is $\tilde{\mathbf{P}}_{fpr} + \mathbf{I}$, which is equivalent to giving each node a large probability to move to itself in a random walk on $\tilde{G}_c$. It makes the model hard to capture enough information from neighbors. Hence, we assume that the probability of the node moving to itself is the same as moving to its neighbors in each step, and it can be satisfied by $\widehat{\mathcal{T}}$ in \cref{eq:aug_DiLaplacian}.

\section{Proof of Theorem 3.1}\label{app:proof}

\begin{proof}
As $\tilde{\mathbf{P}}_{fpr}^t \mathbf{J} = \mathbf{J}$, we have $\mathbf{Z}\mathbf{J} = \boldsymbol{0}_{n \times n}$. Let $\mathfrak{T} = \Pi^{-\frac{1}{2}} \tilde{\mathbf{T}} \Pi^{-\frac{1}{2}} = \Pi^{\frac{1}{2}} (\tilde{\mathbf{D}}_{c}^{-1}-\tilde{\mathbf{P}}_{fpr}) \Pi^{-\frac{1}{2}}$, $\mathcal{J} =\Pi^{\frac{1}{2}} \mathbf{J} \Pi^{\frac{1}{2}}$ and $\mathcal{Z} = \Pi^{\frac{1}{2}}\mathbf{Z}\Pi^{-\frac{1}{2}}$, from \cref{eq:fm} we have:
\begin{equation}
    \mathcal{Z} + \mathcal{J} = (\mathfrak{T} + \mathcal{J}-\tilde{\mathbf{D}}_{c}^{-1} + \mathbf{I}) ^{-1},
\end{equation}
then multiplying from the right by $(\mathfrak{T} + \mathcal{J}-\tilde{\mathbf{D}}_{c}^{-1} + \mathbf{I})$, we have:
\begin{equation}
\begin{aligned}
    \mathbf{I} &=(\mathcal{Z} + \mathcal{J})(\mathfrak{T} + \mathcal{J}-\tilde{\mathbf{D}}_{c}^{-1} + \mathbf{I}) \\
    & = \mathcal{Z}\mathfrak{T} + \mathcal{Z}\mathcal{J}-\mathcal{Z}\tilde{\mathbf{D}}_{c}^{-1} + \mathcal{Z} +\mathcal{J}\mathfrak{T} + \mathcal{J}^2-\mathcal{J}\tilde{\mathbf{D}}_{c}^{-1} + \mathcal{J}.
\end{aligned}
\label{eq:fm2}
\end{equation}
Since $\mathcal{J}^2 = \mathcal{J}$, $\mathcal{J}\mathfrak{T} = \mathcal{J}(\tilde{\mathbf{D}}_{c}^{-1} - \mathbf{I})$, and $\mathcal{Z}\mathcal{J} =  \Pi^{\frac{1}{2}} \mathbf{Z}\mathbf{J} \Pi^{\frac{1}{2}} = \boldsymbol{0}_{n \times n}$, \cref{eq:fm2} can be simplified to:
\begin{equation}
    \mathcal{Z}(\mathfrak{T}-\tilde{\mathbf{D}}_{c}^{-1} + \mathbf{I}) = \mathbf{I}-\mathcal{J}
    \label{eq:fm3}
\end{equation}
Similarly, multiplying from the left we have $(\mathfrak{T}-\tilde{\mathbf{D}}_{c}^{-1} + \mathbf{I})\mathcal{Z} = \mathbf{I}-\mathcal{J}$. As $\pi^\top \mathbf{Z} = \boldsymbol{0}$, we have $\mathcal{J}\mathcal{Z} = 0$. Thus, $\mathcal{Z}(\mathfrak{T}-\tilde{\mathbf{D}}_{c}^{-1} + \mathbf{I}) \mathcal{Z}=\mathcal{Z}$. Furthermore, it is easy to derive that $(\mathfrak{T}-\tilde{\mathbf{D}}_{c}^{-1} +\mathbf{I})\mathcal{J} = 0$, then have $(\mathfrak{T}-\tilde{\mathbf{D}}_{c}^{-1} + \mathbf{I}) \mathcal{Z} (\mathfrak{T}-\tilde{\mathbf{D}}_{c}^{-1} + \mathbf{I}) = (\mathfrak{T}-\tilde{\mathbf{D}}_{c}^{-1} + \mathbf{I})$. Besides, the left part of \cref{eq:fm3} symmetric, so $(\mathcal{Z}(\mathfrak{T}-\tilde{\mathbf{D}}_{c}^{-1} + \mathbf{I}))^\top = \mathcal{Z}(\mathfrak{T}-\tilde{\mathbf{D}}_{c}^{-1} + \mathbf{I})$. Similarly, $((\mathfrak{T}-\tilde{\mathbf{D}}_{c}^{-1} + \mathbf{I})\mathcal{Z})^\top = (\mathfrak{T}-\tilde{\mathbf{D}}_{c}^{-1} + \mathbf{I})\mathcal{Z}$. These facts satisfy the sufficient conditions of Moore–Penrose pseudoinverse, such that 
\begin{equation}
    \mathcal{Z} = (\mathfrak{T}-\tilde{\mathbf{D}}_{c}^{-1} + \mathbf{I}) ^{\dagger}.
    \label{eq:fm4}
\end{equation}
Finally, recovering $\mathcal{Z}$ and $\mathfrak{T}$, which concludes the proof.
\end{proof}

\section{Implementation Details}\label{app:id}
\paragraph{Hardware infrastructures.} 
The experiments are conducted on Linux servers installed with a NVIDIA Quadro RTX8000 GPU and ten Intel(R) Xeon(R) Silver 4210R CPUs.

\paragraph{Hyperparameter Specifications}
All parameters of our DiL-GCN and DiL-GCN$_{CT}$ are initialized with Glorot initialization~\cite{glorot2010understanding}, and trained using Adam optimizer~\cite{kingma2014adam} with learning rate $lr$ selected from $\{0.01, 0.005\}$. The activation function $\sigma$ is $\mathrm{ReLU}$. The $l_2$ weight decay is selected from $\{0, 5e-4, 8e-4 ,1e-3\}$, and the dropout rate~\cite{srivastava2014dropout} is selected from $\{0.5, 0.6, 0.7\}$. The hyperparameter $window\_size$ of Feature-aware PageRank is set to a non-zero even number, which selected from $\{1,2,3\}$. The number of iterations $t$ of the power method to compute the stationary distribution $\pi$ is set to 30 for all datasets. The dimension of hidden layers is selected from $\{48, 64, 96\}$ for all datasets and the number of layers is set to 2 for all datasets. For DiL-GCN$_{CT}$, we set $\mu$ to 0.97 for all datasets to ensure the sparsity of the graph propagation matrix. In addition, we use an early stopping strategy on accuracies on the validation nodes, with a patience 500 epochs. All dataset-specific hyperparameter configurations are summarized in \Cref{tab:hyperparameter}.

\begin{table}[h]
\centering
\caption{Hyperparameter specifications.}
	\begin{tabular}{cccccccccc}
			\toprule
			Dataset & $window\_size$  &  $lr$ &\makecell{Weight\\decay}  & \makecell{Hidden\\dimension}   \\
			\midrule
			Texas        &  1      & 0.01     & 8e-4    & 48                      \\
			Wisconsin    &  2      & 0.01     & 1e-3    & 96                        \\
			Actor		 &  1      & 0.01     & 1e-3    & 64                       \\
			Squirrel     &  1      & 0.005    & 5e-4    & 64                        \\
			Chameleon	 &  1      & 0.01     & 5e-4    & 64                       \\
			Cornell      &  3      & 0.01     & 1e-3    & 64                         \\
			deezer       &  1      & 0.01     & 5e-4    & 64                        \\  
			Citeseer     &  2      & 0.01     & 8e-4    & 64                           \\
			CoraML       &  3      & 0.01     & 5e-4    & 96                          \\
			CoauthorCS   &  1      & 0.01     & 5e-4    & 64                          \\
			\bottomrule
	\end{tabular}
\label{tab:hyperparameter}
\end{table}

\end{document}